\documentclass{article}

\PassOptionsToPackage{round}{natbib}

\usepackage[preprint]{neurips_2026}

\usepackage{mymacros}
\usepackage[utf8]{inputenc} 
\usepackage[T1]{fontenc}    
\usepackage{url}            
\usepackage{booktabs}       
\usepackage{amsfonts}       
\usepackage{nicefrac}       
\usepackage{microtype}      
\usepackage[table]{xcolor}         

\usepackage{enumitem}
\usepackage{subcaption}
\usepackage{multicol}
\usepackage{hyperref}
\usepackage{hhline}

\pgfplotsset{compat=1.18}

\title{Generalizing Preference-based Reinforcement Learning: a Rationality Model for Incomparability}

\author{%
    Simone Drago \\
    Politecnico di Milano, Milan, Italy \\
    \texttt{simone.drago@polimi.it} \\
    \And 
    Marco Mussi \\
    Politecnico di Milano, Milan, Italy \\
    \texttt{marco.mussi@polimi.it} \\
    \AND
    Leonardo Bianconi \\
    Politecnico di Milano, Milan, Italy \\
    \texttt{leonardo.bianconi@mail.polimi.it} \\
    \And
    Alberto Maria Metelli \\
    Politecnico di Milano, Milan, Italy \\
    \texttt{albertomaria.metelli@polimi.it} \\
}

\begin{document}

\setlength{\abovedisplayskip}{4pt}
\setlength{\belowdisplayskip}{4pt}
\setlength{\textfloatsep}{8pt}
\setlength{\floatsep}{8pt}
\allowdisplaybreaks[4]

\maketitle

\begin{abstract}
In this work, we study the \emph{reinforcement learning} (RL) problem from pairwise trajectory comparisons provided by a human expert.
We generalize preference-based RL by formalizing a novel setting in which the expert can also label trajectory pairs as \emph{incomparable}, i.e., when neither trajectory \quotes{dominates} the other.
We introduce the learning problem and the desiderata that its solution should satisfy.
Then, we propose a novel Bradley–Terry-inspired rationality model that effectively captures incomparabilities and infers a multi-dimensional reward function, and we study its properties.
We provide a sample complexity analysis for learning the model parameters when a dataset is available.
Finally, we evaluate our model's ability to reconstruct a reward function that aligns with the expert’s comparisons in simulated environments and to recover the Pareto frontier of policies, along with a robustness analysis across varying levels of expert rationality.
\end{abstract}

\section{Introduction}
\label{sec:intro}

In \emph{reinforcement learning} \citep[RL,][]{sutton2018reinforcement}, a learning agent interacts with an environment observing a \emph{state} and, based on it, selecting an \emph{action}, to optimize a certain objective function over a given time horizon. 
Crucially, the learning process is guided by a \emph{reward} function, i.e., a numerical signal provided as feedback to the agent after each action \citep{sutton_reward_hypothesis}.
With such a formulation, the agent's goal is to maximize the expected cumulative reward.
The reward is often referred to as \quotes{the most succinct description of a task} \citep{ng2000algorithms}.
However, defining a reward function is a complex endeavor. The system engineer, who designs it, must select numerical values to induce a behavior that \quotes{solves} the task, while avoiding unwanted, possibly dangerous, phenomena \citep[e.g., reward hacking, ][]{amodei2016concrete}. This process, namely \emph{reward engineering} \citep{dewey2014reinforcement}, is a trial-and-error iterative process that relies on domain knowledge and requires tweaking the reward function, as the induced behavior is highly sensitive to misspecified rewards \citep{pan2022effects}.

\emph{Preference-based RL} \citep[PbRL,][]{wirth2017survey} has emerged as a powerful paradigm for overcoming the inherent reward design challenges of RL.
It avoids the reward engineering process entirely, relying instead on pairwise comparisons provided by a \emph{human expert} and learning the policy that best aligns with them.
PbRL has achieved remarkable results in domains ranging from robotics \citep{lee2021pebble} to \emph{large language models} \citep{ouyang2022training}.
One common approach, namely \emph{reinforcement learning from human feedback} \citep[RLHF,][]{christiano2017deep} assumes the existence of an underlying, unknown reward \citep{friedman1952expected}, and comprises two steps: ($i$) reward model estimation from the expert's preferences and ($ii$) policy optimization via traditional RL methods.
PbRL's achievements are largely built on the use of \emph{rationality models}, which link the underlying (unknown) reward function to observed preferences and guide the stochastic preference generation process.
The \emph{Bradley-Terry} \citep[BT,][]{bradley1952rank} model is most commonly employed in the literature \citep[see, e.g.,][]{christiano2017deep, rafailov2024direct, munos2024nash}, and it is fed by utility functions that quantify the \quotes{goodness} of the alternatives, often represented by the cumulative reward.
Under the BT model, only \quotes{clear} preferences are possible, i.e., when presented with two alternatives, the human expert always provides a preference label, usually generated stochastically. The probability of observing the preference of one alternative over the other is assumed to be proportional to (a monotonic transformation of) their difference in utility.
Alternative approaches exist, e.g., assuming a different probabilistic model \citep[e.g.,][]{thurstone1927a} and permitting \emph{indifference} as a feedback \citep{rao1967ties}, i.e., allowing for the additional feedback of the two alternatives being equivalent.

Because of the presence of clear preferences and indifference, one crucial assumption of PbRL is that a \emph{scalar} utility function is sufficient to solve the problem.
However, in real-world scenarios, an expert may be unable to provide either of these forms of feedback.
A practical solution is to attribute this to bad data or to the expert's irrationality, and discard the sample, as in \citep{christiano2017deep}.



\colorbox{gray!15}{
\parbox{0.97\textwidth}{
\centering
\emph{We argue, instead, that abstaining from providing a clear preference or stating indifference is an expression of the human expert's rationality, or, put differently, a feature of human nature.}
}
}

Indeed, it is intuitive to understand that there exist scenarios in which a human may be unable to compare two alternatives, such as when \emph{multiple objectives are in conflict}, e.g., speed vs. safety in autonomous driving, where it may happen that neither alternative \emph{dominates} the other, i.e., is better in all aspects.
In such cases, the inability to choose is not a failure of the expert but the expression of a different model of rationality.
We refer to this feedback as \emph{incomparability}. When utility functions are involved, this indicates the existence of an underlying \emph{multi-dimensional} reward function that represents the various objectives.
In the literature, the framework that allows dealing with multiple objectives is that of \emph{multi-objective reinforcement learning} \citep[MORL,][]{hayes2022practical}.
In MORL, a single, universally optimal behavior, so-called \emph{utopian}, is typically unattainable and, thus, the goal is to recover a set of non-dominated behaviors, the so-called \emph{Pareto frontier} \citep{roijers2013survey}. 

\begin{wraptable}{r}{0.5\textwidth}
\centering
\renewcommand{\arraystretch}{2.5}
\small\vspace{-.4cm}
\begin{tabular}{lcc}
& Single-Objective & Multi-Objective \\ 
\cline{2-3}
Rewards 
& \multicolumn{1}{|c|}{RL} 
& \multicolumn{1}{c|}{MORL} \\ 
\hhline{~--}
\shortstack[l]{Human\\Feedback} 
& \multicolumn{1}{|c|}{PbRL} 
& \multicolumn{1}{c|}{\cellcolor{black!20}\textbf{CbRL}} \\ 
\cline{2-3}
\end{tabular}

\caption{Positioning of CbRL in the literature.}
\label{tab:cbrl_positioning}
\end{wraptable}

The previous argument reveals that PbRL and MORL, taken individually, each address only half of the problem: MORL handles multiple conflicting objectives but requires dense reward signals, while PbRL handles human feedback but assumes a scalar utility that cannot represent incomparability. To the best of our knowledge, a framework capable of recovering a Pareto frontier of behaviors from \emph{comparison}\footnote{We use the term \quotes{comparison} to include clear preferences, indifference, and incomparability.} feedback alone is still lacking.
In this work, to fill this gap, we introduce \emph{comparison-based reinforcement learning} (CbRL). As shown in Table~\ref{tab:cbrl_positioning}, CbRL establishes a connection between the two, serving as the human-feedback counterpart of MORL and the multi-objective counterpart of PbRL. By modeling incomparability in all regards as a feedback signal, {CbRL augments MORL with pairwise comparisons} and extends PbRL beyond scalar utility functions. 
In doing so, it must confront the challenges of PbRL and MORL jointly, i.e., learning from stochastically-generated preferences and dealing with a multi-dimensional utility, respectively.

\textbf{Related Works.}~~Our work aims to extend PbRL to the multi-objective scenario. Traditional PbRL assumes that human preferences can be defined by a single scalar reward function, employing the BT model or similar alternatives \citep[e.g., ][]{thurstone1927a} to link cumulative rewards to observed feedback. However, in recent years, this assumption has come under scrutiny from the perspective of social choice theory, showing that scalar rewards are often insufficient to capture the heterogeneity of preferences across a diverse set of users \citep{chakraborty2024maxmin}.
Whereas the concept of indifference between alternatives has a long history in the literature \citep{rao1967ties, davidson1970extending}, the phenomenon of \emph{incomparability}, i.e., an expert refusing to choose between alternatives due to an inherent conflict between contrasting objectives, has largely been overlooked, either attributing incomparable samples to bad data and discarding them \citep[e.g., ][]{christiano2017deep}, or bypassing the problem by eliciting a multi-objective preference conditioned to a known weighting of the objectives \citep{mu2025preference}. In the literature, incomparability was first introduced to model the \emph{rational choice} of the expert to not take a definite position w.r.t. a comparison \citep{sen1970collective, roy1984relational}. Our CbRL framework, motivated by the idea of incomparabilities being a consequence of a multi-objective problem, and thus inducing a Pareto frontier of policies \citep{abdelkareem2022advances}, enables recovering a Pareto frontier of policies by explicitly modeling incomparability through a novel multi-objective rationality model, while limiting the cognitive burden on the human expert. We refer the reader to Appendix~\ref{apx:related} for a more in-depth discussion of related works.

\textbf{Original Contributions.}~~
The contributions of this paper are summarized as follows:
\begin{itemize}[noitemsep, topsep=-1pt, leftmargin=*]
    \item In Section~\ref{sec:setting}, we formalize the novel MDPC framework, incorporating incomparability as a rational feedback, and the desiderata of an MDPC-compliant rationality model. We show that no rationality model can satisfy the desiderata and yield a convex negative log-likelihood (Proposition~\ref{prop:noncvx}).
    \item In Section~\ref{sec:model}, we devise a novel BT-inspired rationality model for CbRL, we analyze its properties showing that it complies with our desiderata (Lemma~\ref{lem:compliance}), and we study the sample complexity for learning its parameters (Theorems~\ref{thr:sampleCompl} and \ref{thr:sampleComplLocal}).
    \item In Section~\ref{sec:experiments}, we numerically evaluate the capability of our model to reconstruct a multi-dimensional reward function in simulated environments, we discuss the limitations of PbRL approaches in tackling CbRL, we provide a robustness analysis to experts' irrationality, and we discuss learning the Pareto frontier of behaviors starting from the reconstructed reward function.
\end{itemize}

\textbf{Notation.}~~Given $a, b \in \mathbb{N}$ with $a < b$, we define $\dsb{a} \coloneqq \{ 1, 2, \ldots , a\}$ and $\dsb{a,b} \coloneqq \{ a, a+1, \ldots , b\}$.
Given a finite set $\mathcal{X}$, we denote as $\Delta(\mathcal{X})$ the probability simplex over $\mathcal{X}$, and with $|\mathcal{X}|$ its cardinality.
We denote as $\bm{1}_n, \bm{0}_n \in \mathbb{R}^n$ the column vectors of $n$ ones and zeros, respectively.
Let $\bm{x}, \bm{y} \in \mathbb{R}^n$ be real-valued vectors, we denote as $\bm{x} \odot \bm{y}$ the Hadamard product, as $t \bm{x}$ the rescaling of $\bm{x}$ by  $t \in \mathbb{R}\cup\{\pm\infty\}$, and as $\std (\bm{x}) = \| \bm{x} - \overline{x} \bm{1}_n\|_2 / \sqrt{n}$ its {standard deviation}, where $\overline{x}$ is the empirical mean.
Given a matrix $\bm{A} \in \mathbb{R}^{m \times n}$, we denote the column vector obtained by stacking the columns of a matrix as $\mathrm{vec}(\bm{A}) \in \mathbb{R}^{mn}$. Let $P, Q \in \Delta(\Xs)$
such that $Q(x) = 0$ implies $ P(x) = 0$ for every $x \in \Xs$, we define their Kullback-Leibler (KL) divergence as $\dkl{P}{Q} \coloneqq \sum_{x \in \Xs} P(x) \log (P(x) / Q(x))$.
We use the $\widetilde{\BigO} (\cdot)$ notation to omit constant terms and logarithmic factors that do not depend on $\delta$.
\section{Framework for Incomparability}
\label{sec:setting}

In this section, we introduce \emph{incomparability} feedback, propose the novel setting of the \emph{Markov decision processes with comparison} (MDPC), a mathematical framework for learning from comparisons, discuss the desiderata for a rationality model, and formulate the learning goal. Finally, we provide an important impossibility result.

\textbf{Comparison Feedback.}~~Suppose a human expert is shown a pair of alternatives and is asked to provide a \emph{comparison} between the two.
Intuitively, we expect one of the following outcomes:
\begin{enumerate}[noitemsep, topsep=-1pt, leftmargin=20pt]
    \item[($\prelsinv$)] \emph{Direct preference}, when the expert shows a clear preference for the first alternative;
    \item[($\prels$)] \emph{Inverse preference}, when the expert shows a clear preference for the second alternative;
    \item[($\equivrel$)] \emph{Indifference}, when the expert considers the two alternatives to be equivalent;
    \item[($\,\incomprel\;\!$)] \emph{Incomparability}, when none of the above holds.
\end{enumerate}
Incomparability was first introduced in decision theory as the expert's choice
to \quotes{not take a definite position} w.r.t.~a comparison \citep{roy1984relational}.
An axiomatic foundation based on first-order logic \citep{tsoukias1995new} was later proposed, capturing two fundamental reasons for the arising of incomparabilities.
The first is when the data is incomplete, the request is ambiguous, or the expert presents some degree of irrationality and cannot choose.\footnote{This is the case commonly assumed in PbRL, when data is labeled by human experts \citep[e.g.,][]{christiano2017deep}.}
The second, instead, occurs when there is an inherent conflict among alternatives, or when the expert considers a set of criteria that cannot be directly combined. 
In this work, we focus on the latter.
When cast to RL, a convenient characterization is that of multiple, possibly conflicting, objectives, i.e., the setting of MORL.

\textbf{Framework.}~~Before defining the \emph{Markov decision process with comparisons}, we report the framework it generalizes, namely the finite-horizon \emph{Markov Decision Process without reward} \citep[\mdpr,][]{abbeel2004apprenticeship}, i.e., a tuple $(\Ss, \As, H, p, \mu)$, where $\Ss$ is the state space, $\As$ is the action space, $H \in \mathbb{N}$ is the horizon, $p = (p_h)_{h \in \dsb{H}}$ is the stage-dependent transition model defined as $p_h: \Ss \times \As \to \Delta(\Ss)$ for every $h \in \dsb{H}$, and $\mu \in \Delta(\Ss)$ is the initial state distribution.
A \emph{trajectory} is a sequence of state-action pairs, as $\tau \coloneqq (s_i, a_i)_{i \in \dsb{H}}$, and the space of possible trajectories as $\Ts$.

We now formalize the MDPC, which allows us to define the problem of learning from comparisons.
\begin{defi}[Markov Decision Process with Comparisons]
\label{defi:mdpc}
    A Markov decision process with comparisons (MDPC) is a tuple $(\Ss, \As, H, p, \mu, \rho)$, where $(\Ss, \As, H, p, \mu)$ is an \mdpr and $\rho: \Ts^2 \to \Delta(\Comps)$ is the \emph{comparison probability distribution} where $\Comps = \{\prelsinv, \prels, \equivrel, \incomprel\}$ is the \emph{outcome} set.
\end{defi}
Notably, the MDPC can be interpreted as the \quotes{multi-objective counterpart} of the \emph{Markov decision process with preferences} \citep{wirth2017survey}.
Thus, upon selecting two trajectories $\tau, \tau' \in \Ts$, the agent receives a \emph{comparison} from the set of {outcomes} $\Comps = \{\prelsinv, \prels, \equivrel, \incomprel\}$ generated stochastically by an expert according to {comparison probability distribution} $\rho$.
For example, we represent the probability of a trajectory pair being labeled as indifferent as $\rho(\equivrel | (\tau, \tau'))$. Throughout the paper, we will use the infix notation $\rho(\tau \circ \tau')$ to denote $\rho(\circ | (\tau, \tau'))$ for an outcome $\circ \in \Comps$.

We model the behavior of an agent interacting with an MDPC via a stage-dependent Markovian policy $\pi \coloneqq (\pi_h)_{h \in \dsb{H}}$, where $\pi_h: \Ss \to \Delta(\As)$ for each $h \in \dsb{H}$.

To provide a tractable model, as customary in PbRL, we follow the \emph{utility representation theorem} \citep{vonneumann1947}, which postulates that any \emph{rational agent}\footnote{In CbRL, we say that the human expert is \emph{rational} if their interaction with the learning agent can be modeled as an MDPC.} generates preferences in terms of an underlying utility function defined over the set of available options.
In CbRL, this translates to the definition of a multi-dimensional utility function $\bm{u}:\!\Ts\!\to\!\mathbb{R}^d$, mapping trajectories to $d$-dimensional utility vectors, where $d \in \mathbb{N}$ is the number of objectives, such that the probability of observing a comparison is proportional to a function of the utility difference of the two trajectories \citep{drago2025towards}.
Formally, this corresponds to enforcing $\rho(\tau \circ \tau') \!\coloneqq\! f(\circ | \utildiff)$ for every $\circ \!\in\! \Comps$ and $\tau,\tau' \!\in\! \mathcal{T}$, with $\utildiff \coloneqq \bm{u}(\tau) - \bm{u}(\tau')$ and $f: \mathbb{R}^d \!\to\! \Delta(\Comps)$.
We call $f$ a \emph{rationality model}.

We now define and discuss the desiderata that a rationality model $f$
should satisfy.

\begin{defi}[MDPC Model Desiderata]
\label{defi:desiderata}
    A function $f:\mathbb{R}^d \to \Delta(\Comps)$ is a \emph{desirable rationality model} for an MDPC if the following properties hold for every $\tau, \tau' \in \Ts$:\\
    \begin{minipage}{0.48\textwidth}
    \begin{align}
        \argmax_{\circ \in \Comps} \lim_{\bm{\delta} \to +\infty \bm{1}_d}  &f(\circ | \bm{\delta}) = \{ \prelsinv \}, \label{eq:desiderata:direct_pref} \\
        \argmax_{\circ \in \Comps} \lim_{\bm{\delta} \to -\infty \bm{1}_d}  &f(\circ | \bm{\delta}) = \{ \prels \}, \label{eq:desiderata:inverse_pref}
        \end{align}
        \end{minipage}
         \begin{minipage}{0.48\textwidth}
        \begin{align}
        \argmax_{\circ \in \Comps} \lim_{\bm{\delta} \to \bm{0}_d} &f(\circ | \bm{\delta}) = \{\equivrel\}, \label{eq:desiderata:indifference} \\
        \argmax_{\circ \in \Comps} \lim_{\bm{\delta} \to +\infty \bm{t}} &f(\circ | \bm{\delta}) = \{\,\incomprel\,\}, \label{eq:desiderata:incomparability}
    \end{align}
    \end{minipage} %
    where $\bm{\delta} = \bm{\delta}(\tau, \tau')$ and $\bm{t} \in \{-1, 1\}^d \setminus \{\bm{1}_d, -\bm{1}_d\}$.   
\end{defi}

\begin{wrapfigure}{r}{0.5\textwidth}
\vspace{-.5cm}
  \begin{center}
    \centering
    \resizebox{\linewidth}{!}{\usetikzlibrary {arrows.meta}

\begin{tikzpicture}

\definecolor{lightgray204}{RGB}{204,204,204}
\definecolor{color1}{RGB}{216,27,96}      
\definecolor{color2}{RGB}{30,136,229}      
\definecolor{color3}{RGB}{255,193,7}      
\definecolor{color4}{RGB}{0,77,64}    

\begin{axis}[
    xtick=\empty,
    ytick=\empty,
    xmin=-2, xmax=2,
    ymin=-2, ymax=2
]
    \addplot graphics [includegraphics cmd=\pgfimage, xmin=-2, xmax=2, ymin=-2, ymax=2] {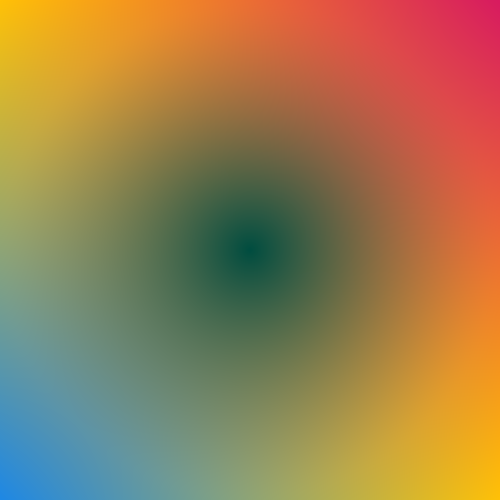};
    
    \draw[-Triangle, thick, draw=black] (axis cs:-2,0) -- (axis cs:2,0);
    
    \draw[-Triangle, thick, draw=black] (axis cs:0,-2) -- (axis cs:0,2);
    
    \draw (axis cs:0.9, 0.3) node[
      scale=1.3,
      anchor=west,
      text=black,
      rotate=0.0
    ]{$\delta_1(\tau,\tau')$};
    
    \draw (axis cs:0.6, 1.5) node[
      scale=1.3,
      anchor=south,
      text=black,
      rotate=0.0,
    ]{$\delta_2(\tau,\tau')$};
\end{axis}
\hspace{0.3cm}
\node[anchor=north, inner sep=0pt, yshift=-0.35cm] at (current bounding box.south) {
    \begin{tikzpicture}
        \node[draw=none, fill=none, inner sep=1pt] {
            \begin{tabular}{clcl}
                \tikz\fill[color1] (0,0) rectangle (0.6,0.3); 
                & \Large $\prelsinv$ (Direct Preference)
                & \tikz\fill[color4] (0,0) rectangle (0.6,0.3);
                & \Large $\equivrel$ (Indifference) \\[5pt]

                \tikz\fill[color2] (0,0) rectangle (0.6,0.3);
                & \Large $\prels$ (Inverse Preference)
                & \tikz\fill[color3] (0,0) rectangle (0.6,0.3);
                & \Large \,$\incomprel$\, (Incomparability)
            \end{tabular}
        };
    \end{tikzpicture}
};

\end{tikzpicture}}
    \caption{Desiderata for a 2D CbRL problem.}
    \label{fig:modeldesiderata}
    \vspace{-.3cm}
\end{center}
\end{wrapfigure}

Definition~\ref{defi:desiderata} puts the focus of the rationality model on characterizing the mode outcome (i.e., the highest probability one) in various limits of utility differences.
Intuitively, given two trajectories $\tau, \tau' \!\in\! \Ts$, if $\tau$ \emph{dominates} $\tau'$, i.e., is better w.r.t.\ all objectives, we expect that the most probable outcome that a human expert can provide is direct preference (Equation \ref{eq:desiderata:direct_pref}).
Indeed, in this case, the utility difference vector $\bm{\delta}$ has all positive components and is located in the positive direction along the main diagonal.
A similar argument holds for the inverse preference, showing that clear dominance in either direction reflects the desiderata of PbRL (Equation \ref{eq:desiderata:inverse_pref}).
Additionally, since a rational expert implicitly evaluates the underlying utility of the alternatives, we expect that in case the two trajectories are similar, the most probable outcome is indifference. This is reflected by Equation~\eqref{eq:desiderata:indifference}, prescribing that indifference must be the most likely outcome when the utilities of the two trajectories are element-wise equal.
Finally, we expect incomparability to be the most likely outcome when neither alternative dominates the other, i.e., when one is better in terms of one objective and worse in terms of another, e.g., \quotes{fast but risky} vs.\ \quotes{slow but safe} in autonomous driving.
Equation~\eqref{eq:desiderata:incomparability} captures this behavior, prescribing that, along all the non-standard diagonals of the $d$-dimensional hyperspace of utility differences, the most probable outcome is incomparability ($\incomprel$). Intuitively, on such diagonals, the utility difference vector $\utildiff$ has at least two contrasting components, and thus non-dominance between the trajectories for at least one objective.
In Figure~\ref{fig:modeldesiderata}, we report a graphical representation of the desiderata of Definition~\ref{defi:desiderata} for the 2-dimensional case.



\textbf{Learning Goal.}~~Given a class of models $\mathcal{F}$ complying with Definition~\ref{defi:desiderata}, a hypothesis space $\mathcal{U} \subseteq \{\bm{u} : \Ts \to \mathbb{R}^d\}$ of utility functions, and a dataset $\mathcal{D}=\{ (\tau_i, \tau_i', \circ_i) \}_{i=1}^N$ of comparison triples, we aim to find the \emph{maximum likelihood} pair $(\widehat{f}, \widehat{\bm{u}})$, made of a rationality model $\widehat{f}$ and utility estimator $\widehat{\bm{u}}$, obtained by minimizing the \emph{empirical negative log-likelihood} (NLL) over the dataset, i.e.:
\begin{equation}
\label{eq:nll}
    (\widehat{f}, \widehat{\bm{u}}) \in \argmin_{f \in \mathcal{F}, \bm{u} \in \mathcal{U}} \widehat{\mathcal{L}} (f, \bm{u} | \mathcal{D}) \coloneqq  - \sum_{(\tau_i, \tau_i', \circ_i) \in \mathcal{D}} \log f(\circ_i | \bm{u}(\tau_i) - \bm{u}(\tau_i')).
\end{equation}
In principle, we would aim to select a class of models $\mathcal{F}$ such that the NLL of Equation~\eqref{eq:nll} is convex \citep{boyd2004convex}.
Unfortunately, this is unfeasible if $\mathcal{F}$ complies with Definition~\ref{defi:desiderata}, as demonstrated by the following negative result.
\begin{restatable}[Impossible Convex NLL]{prop}
{noncvx}\label{prop:noncvx}
    There exists no function $f: \mathbb{R}^d \to \Delta(\Comps)$ that satisfies
    Definition~\ref{defi:desiderata}, and such that $- \log f(\circ | \bm{\delta})$ is convex in $\bm{\delta}$ for every $\circ \in \Comps$.
\end{restatable}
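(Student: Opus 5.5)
The plan is to argue by contradiction. Assume $f$ satisfies Definition~\ref{defi:desiderata} and that every $h_\circ(\bm{\delta}) \coloneqq -\log f(\circ|\bm{\delta})$, $\circ \in \Comps$, is convex on $\mathbb{R}^d$. Then each $h_\circ$ is finite and nonnegative (since $f \le 1$), hence continuous, and each $f(\circ|\cdot)$ is log-concave. The idea is to play the incomparability desideratum~\eqref{eq:desiderata:incomparability}, which constrains $f$ along two \emph{opposite} directions $\bm{t}$ and $-\bm{t}$, against convexity of $h_{\incomprel}$. The conflict should then surface at the origin, where~\eqref{eq:desiderata:indifference} forces $\equivrel$ to be the mode, or along the main diagonal.

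\textbf{Step 1 (a value bound from a strict mode).} Fix $\bm{t} \in \{-1,1\}^d\setminus\{\pm\bm{1}_d\}$ and note that $-\bm{t}$ also belongs to this set. By~\eqref{eq:desiderata:incomparability}, the limit $q \coloneqq \lim_{s\to+\infty} f(\incomprel|s\bm{t})$ exists and is the unique maximiser of the limiting distribution. Since the four limiting probabilities sum to one, this gives $q > 1/4$. The same holds along $-\bm{t}$ with some limit $q' > 1/4$. Hence $h_{\incomprel}$ stays bounded along both ends of the line $\{s\bm{t} : s\in\mathbb{R}\}$.

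\textbf{Step 2 (convexity forces constancy).} A convex function on $\mathbb{R}$ that is bounded above as $s\to+\infty$ and as $s\to-\infty$ is bounded above on all of $\mathbb{R}$, and is therefore constant. So $h_{\incomprel}$ is constant along $s\bm{t}$, which gives $q = q'$ and $f(\incomprel|\bm{0}_d) = q > 1/4$. I will read the limits in Definition~\ref{defi:desiderata} as limits along any path whose direction tends to $\bm{t}$. Under that reading, the same argument applies to every parallel line $\bm{c}+s\bm{t}$, with the same limiting value $q$. Consequently $f(\incomprel|\cdot) \equiv q$ on all of $\mathbb{R}^d$, once the lines over the different admissible $\bm{t}$ are combined in $d \ge 2$.

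\textbf{Step 3 (closing the contradiction; expected main obstacle).} Now $f(\incomprel|\cdot) \equiv q$, so the remaining three masses satisfy $f(\prelsinv|\cdot)+f(\prels|\cdot)+f(\equivrel|\cdot) \equiv 1-q$ everywhere, and each of these three functions is still log-concave. The desiderata then impose three conditions:
\begin{itemize}[noitemsep, topsep=-1pt, leftmargin=*]
    \item $f(\equivrel|\bm{0}_d) > q$, by~\eqref{eq:desiderata:indifference};
    \item $\lim_{s\to+\infty} f(\prelsinv|s\bm{1}_d) > q$, by~\eqref{eq:desiderata:direct_pref};
    \item $\lim_{s\to-\infty} f(\prels|s\bm{1}_d) > q$, by~\eqref{eq:desiderata:inverse_pref}.
\end{itemize}

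I would first check whether these three conditions, together with $q > 1/4$ and the fixed sum $1-q$, already force an infeasibility. A naive count does not do it: it only yields $q < 1/2$.

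So I would next apply Step~2 to $f(\prelsinv|\cdot)$ restricted to lines parallel to $\bm{t}$. Along such lines $\prelsinv$ is never the mode at infinity, so its mass is at most $1-q$ there. The aim is to show that log-concavity lets $f(\prelsinv|\cdot)$ decay in at most one direction per line. Combined with the lower bound $f(\prelsinv)>q$ far out along $\bm{1}_d$, this should push $f(\prelsinv)$ above $q$ far out along some $\bm{c}+s\bm{t}$, which contradicts $\incomprel$ being the strict mode there.

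This transfer of the lower bound from the diagonal direction to the off-diagonal directions is the delicate step. If it fails, my fallback is to argue in $d=2$ through the midpoint inequality
\[
f(\circ|\tfrac{\bm{a}+\bm{b}}{2}) \ge \sqrt{f(\circ|\bm{a})\,f(\circ|\bm{b})},
\]
choosing $\bm{a} = s(1,1)$ and $\bm{b} = s(1,-1)$ and letting $s\to\infty$, so that the sequence of midpoints has direction tending to $(1,0)$.
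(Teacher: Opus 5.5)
Your Steps~1--2 coincide with the opening of the paper's proof. You restrict to the line $\{s\bm t: s\in\mathbb{R}\}$, note that $-\log f(\incomprel|s\bm t)$ is convex and bounded at both ends, hence constant, and deduce $f(\incomprel|s\bm t)\equiv q$ and $f(\equivrel|\bm 0_d)>q$. The extension to all of $\mathbb{R}^d$ is unnecessary. It also rests on a reading of the limits in Definition~\ref{defi:desiderata} that strengthens the hypothesis; the paper only uses $\bm\delta=s\bm t$. Under the paper's reading, in $d=2$ the only admissible directions are $\pm(1,-1)$, so combining different $\bm t$ cannot give constancy on $\mathbb{R}^2$.

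\textbf{The genuine gap is Step~3, where no contradiction is actually produced.}
\begin{itemize}
\item The claim that log-concavity lets $f(\prelsinv|\cdot)$ decay in at most one direction per line is false: a bell-shaped log-concave function decays in both.
\item No mechanism is given for transferring the lower bound along $\bm 1_d$ to lines parallel to $\bm t$.
\item The $d=2$ midpoint fallback gives a lower bound on $f(\prelsinv)$ along $(1,0)$. Definition~\ref{defi:desiderata} imposes nothing in that direction, and the bound is vacuous if $f(\prelsinv|s(1,-1))\to 0$.
\end{itemize}
The main-diagonal conditions are simply not where the contradiction lives.

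\textbf{The missing idea is to stay on the line $\{s\bm t\}$ and apply your own Step~2 to the other three labels.} This uses the bound $q<1/2$ that you dismissed as a by-product of naive counting.

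Let $g_\circ(s)\coloneqq f(\circ|s\bm t)$ for $\circ\in\{\equivrel,\prelsinv,\prels\}$. Each $g_\circ$ is log-concave. By \eqref{eq:desiderata:incomparability} (applied to $\bm t$ and $-\bm t$), each has limits strictly below $q$ as $s\to\pm\infty$, and these limits sum to $1-q$ at each end.

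\begin{itemize}
\item Since $g_{\equivrel}(0)>q>\lim_{s\to\pm\infty}g_{\equivrel}(s)$, the concave function $\log g_{\equivrel}$ falls below $\log g_{\equivrel}(0)$ on each side of $0$. From there it decreases at least linearly, so $g_{\equivrel}(s)\to 0$ as $s\to\pm\infty$.
\item Hence $g_{\prelsinv}+g_{\prels}\to 1-q>q$ at both ends. Each summand has limit $<q$, so both have strictly positive limits at both ends.
\item Step~2 then makes $g_{\prelsinv}$ and $g_{\prels}$ constant, and therefore also $g_{\equivrel}=1-q-g_{\prelsinv}-g_{\prels}$.
\item This contradicts $g_{\equivrel}(0)>q>\lim_{s\to\pm\infty}g_{\equivrel}(s)$.
\end{itemize}
Only \eqref{eq:desiderata:indifference} and \eqref{eq:desiderata:incomparability} are used.

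For comparison, the paper also stays on this line. It closes by asserting that three log-concave functions with constant sum must all be constant, which as a standalone claim is false: take $\frac{K}{2}\sigma(s)$, $\frac{K}{2}\sigma(-s)$ and the constant $\frac{K}{2}$, with $K=1-q$. The mode constraints, and in particular $q<1/2$, are exactly what make that step rigorous.
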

This result provides a strong insight into the complexity of CbRL.
Intuitively, non-convexity arises because Definition~\ref{defi:desiderata} requires the most probable outcome to \quotes{change} twice along any non-standard diagonal, from incomparability at $\pm \infty$ to indifference in the origin.
Indeed, this is necessary, as the problem is symmetric w.r.t.\ every pair of conflicting objectives, and showing the expert the pair $(\tau', \tau)$ rather than $(\tau, \tau')$ does not change the evaluation of the underlying utilities.
As a consequence,
it is more challenging to provide guarantees about reaching the global optimum of Equation~\eqref{eq:nll}.

\section{Model and Theoretical Analysis}
\label{sec:model}

In this section, we propose the novel \emph{multi-objective Bradley-Terry} model, show its compliance with Definition \ref{defi:desiderata}, and study the sample complexity of learning its parameters from a fixed dataset.

\textbf{The Multi-Objective Bradley-Terry Model.}~~Let $\tau, \tau' \in \Ts$ be two trajectories generated by interacting with an MDPC.
We now define the novel \emph{multi-objective Bradley-Terry} model (MOBT), which we employ to represent the comparison-generation process of a human expert.
With a slight notation overload, we define the MOBT model class $\mathcal{F}$ as the set of differentiable \emph{softmax} functions:
\begin{equation}
\label{eq:mobt}
    f(\circ | \tau, \tau') \coloneqq \frac{\exp (h_{\circ} (\tau, \tau'))}{\sum_{\circ' \in \Comps} \exp (h_{\circ'} (\tau, \tau'))},
\end{equation}
defined through $h_{\circ}$, which we refer to as \emph{scores}, for every $\circ \in \Comps$, defined as:\\
\begin{minipage}{0.48\textwidth}
\begin{align}
h_{>}(\tau,\tau') 
    &:= \mathbf{1}_d^{\top}\utildiff / (2d), \label{eq:mobt:direct_pref}\\
h_{<}(\tau,\tau') 
    &:= - \mathbf{1}_d^{\top}\utildiff / (2d). \label{eq:mobt:inverse_pref}
\end{align}
\end{minipage}
\hfill
\begin{minipage}{0.48\textwidth}
\begin{align}
    h_{\equivrel} (\tau, \tau') &\coloneqq \alpha
    \label{eq:mobt:indifference}, \\
    h_{\incomprel} (\tau, \tau') &\coloneqq \sqrt{d} \std (\utildiff) + \beta, \label{eq:mobt:incomparability}
\end{align}
\end{minipage}

where
$\alpha, \beta \in \mathbb{R}$ are learnable parameters.\footnote{Throughout the paper, we assume knowledge of the number of objectives $d$. However, this assumption is not strictly necessary, as the number of dimensions can be overestimated based on the data \citep{drago2025towards}. 
}
Intuitively, $\bm{\delta}(\tau, \tau')$ allows us to quantify the utility difference of the two trajectories as a point in $\mathbb{R}^d$. It clearly captures the concepts of dominance, i.e., all positive or all negative elements, and non-dominance, i.e., some positive elements and some negative ones. The standard deviation in Equation~\eqref{eq:mobt:incomparability} is a convenient and intuitive choice to represent incomparability, as it quantifies the dispersion of the elements of the vector, which in our setting corresponds to quantifying how much the utility vectors differ across the various objectives.
Let us now discuss MOBT's compliance with the desiderata of Definition~\ref{defi:desiderata}.

\begin{restatable}[]{lem}{compliance}
\label{lem:compliance}
    Choosing $\alpha, \beta \in \mathbb{R}$ such that $\alpha >0$ and $\alpha > \beta$, the MOBT model complies with the desiderata of Definition~\ref{defi:desiderata}.
\end{restatable}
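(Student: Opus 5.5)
Since the softmax in Equation~\eqref{eq:mobt} is monotone in the scores, the mode outcome is the one with the largest score $h_\circ$. The plan is to check each of the four limits of Definition~\ref{defi:desiderata} by comparing the scores in Equations~\eqref{eq:mobt:direct_pref}--\eqref{eq:mobt:incomparability} along the relevant direction. We parametrize $\bm{\delta} = t\bm{v}$ with $t \to +\infty$ for the three diagonal limits, and let $\bm{\delta}\to\bm{0}_d$ for the indifference limit. When one score diverges to $+\infty$ faster than all the others, the limiting distribution is a point mass on that outcome, so the argmax is that singleton. At the origin, the limiting distribution is a softmax of finite limiting scores.

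\emph{Indifference (Equation~\eqref{eq:desiderata:indifference}).} As $\bm{\delta}\to\bm{0}_d$, continuity gives $h_{\prelsinv}, h_{\prels} \to 0$, $h_{\equivrel}=\alpha$, and $h_{\incomprel}\to\beta$, because $\std(\bm{0}_d)=0$ and $\std$ is continuous. The strict inequalities $\alpha>0$ and $\alpha>\beta$ then make $\equivrel$ the unique maximizer of the limiting softmax. These two conditions are exactly where the hypotheses are used.

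\emph{Dominance (Equations~\eqref{eq:desiderata:direct_pref}, \eqref{eq:desiderata:inverse_pref}).} For $\bm{\delta}=t\bm{1}_d$, we have $\bm{1}_d^\top\bm{\delta}/(2d)=t/2\to+\infty$, while $\std(t\bm{1}_d)=0$ keeps $h_{\incomprel}=\beta$ fixed and $h_{\prels}=-t/2\to-\infty$. Hence $f(\prelsinv|\bm{\delta})\to1$, and the argmax is $\{\prelsinv\}$. The case $-t\bm{1}_d$ is symmetric.

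\emph{Incomparability (Equation~\eqref{eq:desiderata:incomparability}).} This is the only step needing a computation, and I expect it to be the main obstacle. Take $\bm{\delta}=t\bm{t}$ with $\bm{t}\in\{-1,1\}^d$ having $k$ entries equal to $+1$, where $1\le k\le d-1$. The mean is $m=(2k-d)/d$, and since each entry squared equals $1$, $\std(\bm{t})^2 = 1-m^2$. Therefore $h_{\incomprel}=t\sqrt{d}\sqrt{1-m^2}+\beta$, while $|h_{\prelsinv}|=|h_{\prels}| = t|\bm{1}_d^\top\bm{t}|/(2d) = t|m|/2$. It then suffices to show $\sqrt{d}\sqrt{1-m^2} > |m|/2$. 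This follows because $|m|\le 1-2/d$, so $1-m^2 \ge 1-(1-2/d)^2 = 4(d-1)/d^2 >0$. Consequently $\sqrt{d(1-m^2)}\ge 2\sqrt{(d-1)/d}\ge\sqrt{2}>1/2\ge|m|/2$ for $d\ge2$. The incomparability score therefore grows linearly with a strictly larger slope than every other score (and $h_{\equivrel}$ is constant), so $f(\incomprel|\bm{\delta})\to1$ and the argmax is $\{\incomprel\}$. Note that $d\ge2$ is implicit here, since $\bm{t}$ cannot exist when $d=1$. Combining the four cases yields the lemma.
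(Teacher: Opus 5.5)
Your proof is correct and follows essentially the same route as the paper. Both check each desideratum separately along the corresponding ray, use $\alpha>0$ and $\alpha>\beta$ only at the origin, and for incomparability compute the standard deviation of a sign vector with $k$ positive entries to obtain the same slope comparison $\sqrt{2}>1/2$; your bound $|m|\le 1-2/d$ is equivalent to the paper's $k(d-k)\ge d-1$. On the standard diagonal you also correctly record the incomparability score as $\beta$, where the paper writes $0$.
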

We prove this result by tackling each desiderata separately. Compliance w.r.t. clear preferences holds by observing that the utility difference terms are all aligned along the standard diagonal. The choice of $\alpha$ and $\beta$ ensures compliance with the indifference desiderata. Finally, Equation~\eqref{eq:desiderata:incomparability} prescribes a desideratum along the non-standard diagonals of the $d$-dimensional hyperspace. Intuitively, we model $h_\incomprel(\tau, \tau')$ as a quantity that increases with the distance from the standard diagonal, and we show that the desideratum holds because $h_\incomprel$ grows quicker than $h_{\prelsinv}$ and $h_{\prels}$ along non-standard diagonals.
We now show that the MOBT model generalizes BT and falls back to it in 1-dimensional problems.\footnote{MOBT falls back to BT also when all objectives are aligned, i.e., when there exists a policy that jointly optimizes all.}

\begin{remark}[Equivalence of MOBT and BT in PbRL]
\label{rem:mobt_1d}
    Consider a PbRL instance where a human expert generates preferences according to the BT model, i.e., $\rho (\tau \prelsinv \tau') = \sigma (u(\tau) - u(\tau'))$, where $\sigma(\cdot)$ is the sigmoid function.
    This corresponds to a CbRL instance with $d=1$, since we cannot observe indifferences nor incomparabilities. In this case, MOBT falls back to BT.
    Setting $\alpha = \beta = -\infty$, we get a probability of zero of generating indifferences and incomparabilities.
    Thus, we can rewrite:
    \begin{align}
        f(\prelsinv | \tau, \tau') = \frac{\exp \left( {\delta(\tau, \tau')}/{2}\right)}{\exp \left( {\delta(\tau, \tau')}/{2} \right) + \exp \left( - {\delta(\tau, \tau')}/{2} \right)} = \frac{\exp \left( \delta(\tau, \tau') \right)}{\exp \left( \delta(\tau, \tau') \right) + 1} = \sigma(\delta(\tau, \tau')), \nonumber
    \end{align}
    where $\sigma (\cdot)$ is the sigmoid function. Clearly, the same reasoning also holds for the inverse preference. 
\end{remark}


\textbf{Sample Complexity.}~~We now study the sample complexity of recovering a multi-dimensional reward model under the MOBT model in the case of a parametric utility function. Let $ \Theta \subseteq \mathbb{R}^{d_{\Theta}}$ be the parameter space, with slight abuse of notation, we can rewrite the learning goal of Equation~\eqref{eq:nll} as:
\begin{equation}
\label{eq:nll_theta}
    \widehat{\bm{\theta}} \in \argmin_{\bm{\theta} \in \Theta} \widehat{\mathcal{L}} (\bm{\theta} | \mathcal{D}).
\end{equation}
For the sake of the analysis, we consider the case of a linear utility model. This assumption is widely adopted in both theoretical \citep{jin2023provably} and practical RL \citep{silver2014deterministic}.
\begin{asm}[Feature-based Utility Representation]
\label{asm:feature}
    There exists a known feature map $\bm{\varphi}: \Ss \times \As \to \mathbb{R}^k$ and an unknown matrix $\bm{W} \in \mathbb{R}^{d \times k}$ such that the expert's rationality model is defined in terms of utility function $\bm{u} (\tau) \coloneqq \bm{W} \bm{\phi} (\tau)$, where $\bm{\phi}(\tau) = \sum_{h \in \dsb{H}} \bm{\varphi}(s_h, a_h)$ for every $\tau \in \Ts$.
\end{asm}
Notably, this assumption is convenient for analytical purposes, but is not necessary, as the MOBT model can handle \emph{any} function mapping $(\tau, \tau')$ to $\utildiff$. In practice, $\bm{\phi}$ can be constructed via domain-specific feature engineering \citep{abbeel2004apprenticeship} or extracted from raw inputs through unsupervised pre-training \citep{lee2021pebble}.
Under Assumption~\ref{asm:feature}, we can rewrite model $f$ in terms of its feature representation, i.e., $f(\circ | \bm{W} (\bm{\phi}(\tau) - \bm{\phi}(\tau')))$ for every $\circ \in \Comps$. 
As customary in the literature, we consider a boundedness assumption w.r.t. parameter $\bm{\theta}^*$ \citep{sadigh2017active, yang2020reinforcement, zhao2024ra} and feature vector difference \citep{zhang2021reward, wang2020reward}.
\begin{asm}[Boundedness]
\label{asm:bounded}
    There exist $R, \Lambda \ge 0$ such that:
    \begin{itemize}[noitemsep, topsep=-1pt, leftmargin=*]
        \item The expert's model is parametrized by $\bm{\theta}^*  \coloneqq (\mathrm{vec}(\bm W^*),  \alpha^*,  \beta^*) \in \Theta \subseteq  [-R, R]^{dk+2}$.
        \item The known feature map $\bm{\phi}: \Ts \to \mathbb{R}^k$ satisfies $\| \bm{\phi}(\tau) - \bm{\phi}(\tau')\|_2 \le \Lambda$, for every $\tau, \tau' \in \Ts$.
    \end{itemize}
\end{asm}
Assumptions~\ref{asm:feature} and \ref{asm:bounded} together ensure that the utility $\bm{u}(\tau) = \bm W \bm \phi(\tau)$ is bounded for every $\tau \in \mathcal{T}$.
We can now state our sample complexity result.
\begin{restatable}[]{thr}{sampleCompl}
\label{thr:sampleCompl}
    Let $\bm{\theta}^* \in \Theta$ be the expert's parametrization, and $\mathcal{D} = \{ (\tau_i, \tau_i', \circ_i)\}_{i=1}^{N}$ be a dataset collected i.i.d.\ from an unknown distribution $\mathcal{P}_{\bm{\theta}^*} \in \Delta (\Ts \times \Ts \times \Comps)$ such that $\mathcal{P}_{\bm{\theta}^*} (\tau, \tau', \circ) = f (\circ | \tau, \tau'; \bm{\theta}^*) Q ((\tau, \tau'))$,
    for some $Q \!\in\! \Delta(\Ts^2)$.
    Let $\widehat{\bm{\theta}}$ be the solution to Equation~\eqref{eq:nll_theta}, and define $\mathcal{P}_{\widehat{\bm{\theta}}}$ as its induced distribution under $Q$.
    Under Assumptions~\ref{asm:feature} and \ref{asm:bounded}, for every $\delta \!\in\! (0,1)$, it holds that:
    \begin{equation}
    \label{eq:sampleCompl}
        \dkl{\mathcal{P}_{\bm{\theta}^*}}{\mathcal{P}_{\widehat{\bm{\theta}}}} \le \widetilde{\BigO} \left( R\Lambda d k \sqrt{\frac{ \log\left(1/\delta\right)}{N}}\right).
    \end{equation}
\end{restatable}
Some comments are in order. First, we observe that this result bounds the KL divergence between the expert's \emph{induced distribution} $\mathcal{P}_{\bm{\theta}^*}$ and the estimated one $\mathcal{P}_{\widehat{\bm{\theta}}}$. 
Notice that we are indeed controlling the divergence between the true and learned comparison-generation models, since $\dkl{\mathcal{P}_{\bm{\theta}^*}}{\mathcal{P}_{\widehat{\bm{\theta}}}} \!=\! \E_{\tau,\tau' \!\sim Q}[\dkl{f(\cdot|\tau,\tau';\bm{\theta}^*)}{f(\cdot|\tau,\tau';\widehat{\bm{\theta}})}]$.
Due to the non-convexity of the NLL, no guarantees on the parameter distance $\|\widehat{\bm{\theta}} - \bm{\theta}^*\|_2$ can be derived.
The bound, as expected, decreases with the number of samples and increases linearly with the bound of the feature difference $\Lambda$, as well as the dimension and bound of the parameter space, i.e., $dk$ and $R$. 
Second, our result holds even when the dataset is generated by a model $\mathcal{P}^*$ that falls outside the MOBT class. In such a case, we define $\bm{\theta}^* \in \argmin_{\bm \theta \in \Theta} \E_{\mathcal{P}^*}[\widehat{\mathcal{L}}(\bm \theta| \mathcal{D})]$, i.e., the KL-projection of $\mathcal{P}^*$ onto our class and add to the bound of Equation~\eqref{eq:sampleCompl} an additive term $\dkl{\mathcal{P}^*}{\mathcal{P}_{\bm{\theta}^*}}$ to quantify this approximation.

Notably, Theorem~\ref{thr:sampleCompl} defines $\widehat{\bm{\theta}}$ as the \emph{solution} to Equation~\eqref{eq:nll_theta}, i.e., its global optimum. However, this might be hard to obtain in practice due to the non-convexity of the optimization manifold. Nonetheless, we can recover a {local minimum} of $\widehat{\mathcal{L}}(\bm{\theta}|\mathcal{D})$ by resorting to numerical optimization methods \citep{nocedal2006numerical,paszke2019pytorch}. In the following theorem, we bound the KL divergence between the distribution induced by \emph{any} attainable local optimum and $\mathcal{P}_{\bm{\theta}^*}$.
\begin{restatable}[]{thr}{sampleComplLocal}
\label{thr:sampleComplLocal}
    Let $\bm{\theta}^* \in \Theta$ be the expert's parametrization, and $\mathcal{D} = \{ (\tau_i, \tau_i', \circ_i)\}_{i=1}^{N}$ be a dataset collected i.i.d. from an unknown distribution $\mathcal{P}_{\bm{\theta}^*}$.
    Let $\widehat{\bm{\theta}}_{\mathrm{loc}}$ be a \emph{local minimum} of Equation~\eqref{eq:nll_theta}, and define as $\mathcal{P}_{\minloc{\widehat{\bm{\theta}}}}$ its induced distribution under the same $Q \in \Delta(\Ts^2)$.
    Under Assumptions~\ref{asm:feature} and \ref{asm:bounded}, for every $\delta \in (0,1)$, it holds that:
    \begin{equation}
    \label{eq:sampleComplLocal}
        \dkl{\mathcal{P}_{\bm{\theta}^*}}{\mathcal{P}_{\minloc{\widehat{\bm{\theta}}}}} \le \widetilde{\BigO} \left( R\Lambda d k \sqrt{\frac{ \log\left(1/\delta\right)}{N}} + \mathcal{P}_{\bm{\theta}^*}(\incomprel) \Lambda d \sqrt{k} R \right),
    \end{equation}
    where $\mathcal{P}_{\bm{\theta}^*}(\incomprel) \coloneqq \int_{\tau, \tau'} \mathcal{P}_{\bm{\theta}^*} (\de\tau, \de\tau', \incomprel)$ is the marginal probability of observing an incomparability.
\end{restatable}
We observe that moving from the global optimum, which we are not guaranteed to reach, to a local one results only in an additive error term depending on the probability of observing incomparabilities, which is an intrinsic property of the problem and, as already discussed, the source of non-convexity of the model. We can further bound this quantity by observing that, whenever $\beta \ge 0$, we have $\mathcal{P}_{\bm{\theta}^*} (\tau, \tau', \incomprel) \le \exp(h_{\incomprel}(\tau, \tau'))/(1 + \exp (h_{\incomprel} (\tau, \tau'))) \le h_{\incomprel}(\tau, \tau')$, 
which allows us to write $\mathcal{P}_{\bm{\theta}^*}(\incomprel) \le \sqrt{d} \mathbb{E}_{(\tau, \tau')\sim Q} [ \std (\utildiff) ] + \beta$. Here, the expected standard deviation of the utility difference vector acts as an \emph{index of conflict} between the objectives, and $\beta$ represents a minimum level of incomparability. Notice that when $\beta \rightarrow - \infty$, no incomparability is present, i.e., $\mathcal{P}_{\bm{\theta}^*}(\incomprel) = 0$.
\section{Numerical Validation}
\label{sec:experiments}

In this section, we evaluate our MOBT model in simulated environments, using comparison feedback generated by a synthetic expert.
\footnote{The choice to resort to a simulated evaluation is driven both by the controllability of simulation and by the lack, to the best of our knowledge, of a comprehensive human-labeled dataset comprising indifference and incomparability.}
In the following, we provide a brief overview of the methodology. Then, we provide a summary of the results. We refer the reader to Appendix~\ref{apx:additional} for a complete overview of the methodology and implementation details, together with additional experiments.\footnote{The code used to run the experiments is provided in the supplementary material, and will be made public after acceptance.}

In each experiment, we construct a dataset $\mathcal{D}={\{(\tau_i, \tau_i', \circ_i)\}}_{i=1}^{N}$ by training a suite of behavioral policies to generate a trajectory pool, sampling $K$ trajectory pairs from this pool, and finally querying a \emph{rational expert} to generate $M$ labels for each trajectory pair, such that $N=KM$. We assume that they are generated under the model of Equation~\eqref{eq:mobt} and under Assumption~\ref{asm:feature}, and we model the expert based on \texttt{SimTeacher} \citep{lee2021b}, allowing us to parametrize the degree of irrationality.
We split our dataset into a train and a test set.
We then estimate the parameters $\widehat{\bm{\theta}} \in \mathbb{R}^{dk+2}$, i.e., matrix $\widehat{\bm{W}} \in \mathbb{R}^{d \times k}$ and parameters $\widehat{\alpha}, \widehat{\beta} \in \mathbb{R}$, by optimizing the NLL over the train dataset
using the ADAM optimizer \citep{kingma2015adam}.
Finally, we compute the KL divergence on the test set between the probability distribution induced by our model and the expert's true distribution over $\Comps$.

\begin{figure*}[t]
    \centering
    \begin{subfigure}[t]{0.3\textwidth}
        \centering
        \resizebox{!}{3cm}{
            \includegraphics[]{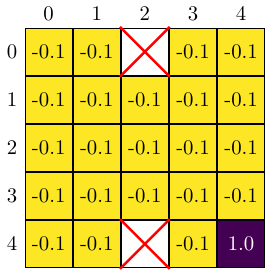}
        }
        \caption{\quotes{Reach the target} objective.}
        \label{fig:gw_rewards:target}
    \end{subfigure}
    \hfill
    \begin{subfigure}[t]{0.3\textwidth}
        \centering
        \resizebox{!}{3cm}{
            \includegraphics[]{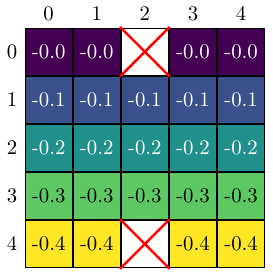}
        }
        \caption{\quotes{Reach top row} objective.}
        \label{fig:gw_rewards:top_row}
    \end{subfigure}
    \hfill
    \begin{subfigure}[t]{0.3\textwidth}
        \centering
        \resizebox{!}{3cm}{
            \includegraphics[]{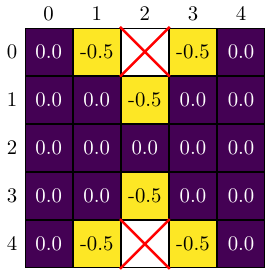}
        }
        \caption{\quotes{Avoid obstacles} objective.}
        \label{fig:gw_rewards:obstacles}
    \end{subfigure}
    \caption{GridWorld true reward matrices.}
    \label{fig:gw_rewards}
\end{figure*}

We consider the following environments: a custom $5\times5$ GridWorld environment with three objectives, graphically represented in Figure~\ref{fig:gw_rewards}: ($i$) reach the bottom-right cell, ($ii$) reach the topmost row, and ($iii$) avoid being adjacent to obstacles, a 1-dimensional \emph{linear quadratic regulator} \citep[LQR,][]{bemporad2002explicit}, to exemplify the computation of the Pareto frontier, and the multi-objective version of the Hopper environment from the MO-Gymnasium suite \citep{felten_toolkit_2023}.


\textbf{Reward Model Reconstruction.}~~First, we evaluate our model's ability to recover a meaningful reward function in a GridWorld environment with a dataset of $N=1000$ samples. We report the inferred reward function in Figure~\ref{fig:gw_rewards_est}. Clearly, our model approximates the true reward, although the reward range is shifted. This is expected, as our model is BT-based and thus translation-invariant.

\begin{figure*}[t]
    \centering
    \hspace{-0.3cm}
    \begin{subfigure}[t]{0.3\textwidth}
        \centering
        \resizebox{!}{3cm}{
            \includegraphics[]{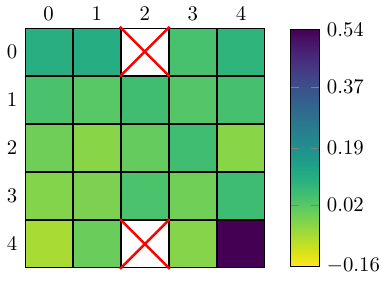}
        }
        \caption{\quotes{Reach the target} objective.}
        \label{fig:gw_rewards_est:target}
    \end{subfigure}
    \hfill
    \begin{subfigure}[t]{0.3\textwidth}
        \centering
        \resizebox{!}{3cm}{
            \includegraphics[]{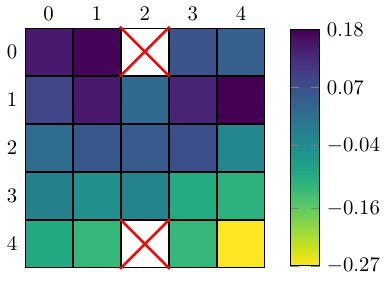}
        }
        \caption{\quotes{Reach top row} objective.}
        \label{fig:gw_rewards_est:top_row}
    \end{subfigure}
    \hfill
    \begin{subfigure}[t]{0.3\textwidth}
        \centering
        \resizebox{!}{3cm}{
            \includegraphics[]{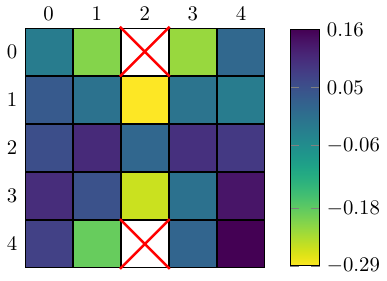}
        }
        \caption{\quotes{Avoid obstacles} objective.}
        \label{fig:gw_rewards_est:obstacles}
    \end{subfigure}
    \caption{Estimated GridWorld reward matrices (mean over 10 runs, K=1000, M=1).}
    \label{fig:gw_rewards_est}
\end{figure*}

Second, we evaluate the KL divergence on the test set in GridWorld and MO-Hopper for increasing numbers of pairs and samples per pair, i.e., $K$ ranging from 500 to 10000 and $M$ ranging from 1 to 10, respectively, and we report the results in Table~\ref{tab:kl_div}.
We observe that performance increases with the total number of comparisons $N$, i.e., $K$ times $M$. This is expected, as both effectively counterbalance the epistemic uncertainty in the observed comparisons. We conjecture that increasing the number of pairs provides the same improvement as increasing the number of samples per pair (as can be seen in the secondary diagonal of Table~\ref{tab:kl_div:gw}) only if the trajectories effectively span the feature space, as otherwise the model would be accurate only in a small portion of this space, with poor generalization.

\begin{table}[t]
    \centering
    
    \begin{subtable}[b]{0.48\textwidth}
        \centering
        \resizebox{\linewidth}{!}{
        \begin{tabular}{l|ccc}
            \toprule
            Pairs $K$ & $M = 1$ & $M = 5$ & $M=10$ \\
            \midrule
            500 & $0.17 \pm 0.01$ & $0.08 \pm 0.01$ & $0.06 \pm 0.01$ \\
            1000 & $0.11 \pm 0.01$ & $0.06 \pm 0.01$ & $0.05 \pm 0.01$ \\
            5000\phantom{0} & $0.06 \pm 0.01$ & $0.05 \pm 0.01$ & $0.04 \pm 0.01$ \\
            \bottomrule
        \end{tabular}}
        \caption{GridWorld Environment.}
        \label{tab:kl_div:gw}
    \end{subtable}
    \hfill
    \begin{subtable}[b]{0.48\textwidth}
        \centering
        \resizebox{\linewidth}{!}{
        \begin{tabular}{l|ccc}
            \toprule
            Pairs $K$ & $M = 1$ & $M=5$ & $M=10$ \\
            \midrule
            1000 & $0.15 \pm 0.02$ & $0.06 \pm 0.01$ & $0.05 \pm 0.01$ \\
            5000 & $0.06 \pm 0.01$ & $0.05 \pm 0.01$ & $0.05 \pm 0.01$ \\
            10000 & $0.06 \pm 0.01$ & $0.05 \pm 0.01$ & $0.05 \pm 0.01$ \\
            \bottomrule
        \end{tabular}}
        \caption{MO-Hopper Environment.}
        \label{tab:kl_div:hop}
    \end{subtable}
    
    \caption{Test KL divergence between estimated and true probabilities (10 runs, mean $\pm$ 95\% C.I.).}
    \label{tab:kl_div}
\end{table}

\textbf{Limitations of Standard PbRL.}~~We now discuss why our MOBT model is necessary to address CbRL. As we thoroughly discussed throughout the paper, CbRL is the human-feedback counterpart to MORL. Solving a MORL problem requires handling multiple, possibly conflicting, objectives. The same requirement holds for CbRL.
Crucially, PbRL approaches are defined to work under the \emph{scalar utility} assumption. Indeed, the straightforward way to employ a PbRL approach in a CbRL problem is to either discard incomparable samples or treat them as indifferences.

To this end, we performed an experiment using a dataset of $K=2000$ trajectory pairs, each labeled once ($M\!=\!1$), to recover a reward function using different PbRL approaches as baselines. For reasons of brevity, we report results for the standard BT \citep{bradley1952rank} and the Rao-Kupper \citep[RK,][]{rao1967ties} models only in Figure~\ref{fig:gw_baseline}, and defer the remaining models to Appendix~\ref{apx:additional:results}.
For BT, we discarded both indifferences and incomparabilities. In contrast, for RK, we have considered both the case in which incomparabilities are discarded (Figure~\ref{fig:gw_baseline:rk}) and the case in which they are considered as indifferences (Figure~\ref{fig:gw_baseline:rk_ties}). We observe that the baselines do not \emph{fail} entirely, but rather fall short of correctly addressing the problem. Indeed, they recover a 1-dimensional reward function corresponding to \emph{one} objective combination, and are thus unable to traverse the Pareto frontier. Instead, a \emph{natively multi-dimensional} rationality model, such as our MOBT, can.

\begin{figure*}[t]
    \centering
    \hspace{-0.3cm}
    \begin{subfigure}[t]{0.3\textwidth}
        \centering
        \resizebox{!}{3cm}{
            \includegraphics[]{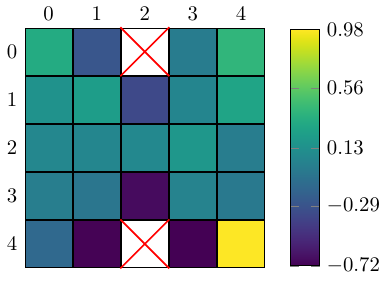}
        }
        \caption{Bradley-Terry model.}
        \label{fig:gw_baseline:bt}
    \end{subfigure}
    \hfill
    \begin{subfigure}[t]{0.3\textwidth}
        \centering
        \resizebox{!}{3cm}{
            \includegraphics[]{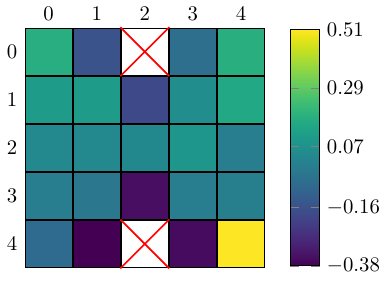}
        }
        \caption{Rao-Kupper model, incomparabilities discarded.}
        \label{fig:gw_baseline:rk}
    \end{subfigure}
    \hfill
    \begin{subfigure}[t]{0.3\textwidth}
        \centering
        \resizebox{!}{3cm}{
            \includegraphics[]{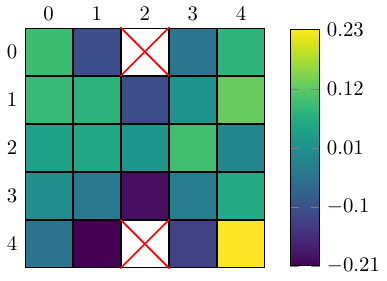}
        }
        \caption{Rao-Kupper model, incomparabilities as indifferences.}
        \label{fig:gw_baseline:rk_ties}
    \end{subfigure}
    \caption{GridWorld reward matrix estimated by baselines (mean over 10 runs, $K= 2000$, $M=1$).}
    \label{fig:gw_baseline}
\end{figure*}

\textbf{Policy-Level Evaluation.}~~We now discuss the ability of our MOBT model to recover a multi-dimensional reward that can be employed to reconstruct the Pareto frontier of policies.
We consider a 1-dimensional \emph{linear quadratic regulator} \citep[LQR,][]{bemporad2002explicit}, as it provides a closed form for the Pareto frontier, enabling us to evaluate our MOBT model meaningfully.
In LQR, the agent suffers two costs after every action: the \emph{state cost}, which penalizes the distance of the agent from the target state 0, and the \emph{control cost}, which penalizes the magnitude of the agent's action.

For increasing values of $K$, with $M\!=\!1$, we estimate parameter $\widehat{\bm{\theta}}$, compute estimated Pareto-optimal policies, and evaluate them in the true environment.
We report the true and estimated Pareto frontiers in Figure~\ref{fig:pareto}.
Qualitatively, we observe that, as $K$ increases, our model induces policies that closely match the true Pareto frontier.
We now consider the ratio between the \emph{hypervolume} \citep[HV,][]{zitzler1999evolutionary, felten_toolkit_2023} of the induced Pareto frontier and the HV of the true Pareto frontier. This ratio takes values in $[0,1]$, with higher values representing better policy-level performance.
In our experiment, it varies from $0.86 (\pm 0.28)$ with 1000 samples to $0.96 (\pm 0.21)$ with 5000 samples, demonstrating our model's ability to solve a MORL problem from pairwise comparison feedback.

\textbf{Robustness to Irrationality.}~~We now evaluate the robustness of MOBT to expert irrationality in GridWorld, considering the \emph{probability of making a mistake}, as modeled by \texttt{SimTeacher} \citep{lee2021b}. To correctly cast it to CbRL, we redefine it as \emph{returning a random label w.p.\ $\epsilon$}. In Figure~\ref{fig:robustness}, we report the KL divergence between the distribution induced by the recovered model and the true, \emph{noiseless} one, for $\epsilon \in [0,1]$ and for different values of $K$, with $M=1$.
As expected, the KL divergence grows with $\epsilon$, since MOBT cannot discern \emph{mistakes} from rationally generated samples.

\begin{figure}[t]
    \centering
    \begin{minipage}[b]{0.48\textwidth}
        \centering
        \resizebox{!}{4cm}{
            \input{images/pareto_frontier_plot}
        }
        \caption{Pareto frontier in LQR (10 runs, mean $\pm$ 95\% C.I.).}
        \label{fig:pareto}
    \end{minipage}
    \hfill 
    \begin{minipage}[b]{0.48\textwidth}
        \centering
        \resizebox{!}{4cm}{
\begin{tikzpicture}

\definecolor{darkcyan32143140}{RGB}{32,143,140}
\definecolor{darkgray176}{RGB}{176,176,176}
\definecolor{darkslateblue48103141}{RGB}{48,103,141}
\definecolor{darkslateblue6857130}{RGB}{68,57,130}
\definecolor{indigo68184}{RGB}{68,1,84}
\definecolor{lightgray204}{RGB}{204,204,204}
\definecolor{mediumseagreen34167132}{RGB}{34,167,132}
\definecolor{mediumseagreen42176126}{RGB}{42,176,126}
\definecolor{mediumseagreen53183120}{RGB}{53,183,120}

\pgfplotsset{scaled x ticks=false}
\pgfplotsset{scaled y ticks=false}

\begin{axis}[
font=\large,
height=7cm,
width=8cm,
legend cell align={left},
legend style={fill opacity=0.8, draw opacity=1, text opacity=1, draw=lightgray204},
tick align=outside,
tick pos=left,
x grid style={darkgray176},
xtick={1000, 2000, 3000, 4000, 5000, 7500, 10000},
xticklabels={1.0, 2.0, 3.0, 4.0, 5.0, 7.5, 10.0},
xlabel={Number of Trajectory Pairs (Thousands)},
xmin=25, xmax=10475,
xtick style={color=black},
xticklabel style={rotate=0.0},
y grid style={darkgray176},
ymin=0.0363090078818891, ymax=0.22598526486731,
ytick={0.050, 0.075, 0.100, 0.125, 0.150, 0.175, 0.200, 0.225},
yticklabels={0.050, 0.075, 0.100, 0.125, 0.150, 0.175, 0.200, 0.225},
ylabel={KL Divergence},
ytick style={color=black},
xmajorgrids,
ymajorgrids,
ylabel near ticks,
]
\path [draw=mediumseagreen53183120, fill=mediumseagreen53183120, opacity=0.2]
(axis cs:500,0.18377058499057)
--(axis cs:500,0.14785064465802)
--(axis cs:1000,0.0897397657057535)
--(axis cs:1500,0.0914651685070571)
--(axis cs:2000,0.0692081832525968)
--(axis cs:2500,0.0626175727182131)
--(axis cs:3000,0.0624962524334323)
--(axis cs:4000,0.0551496205424251)
--(axis cs:5000,0.0481246099976633)
--(axis cs:7500,0.0485579881781583)
--(axis cs:10000,0.0449306559266809)
--(axis cs:10000,0.0563427168426748)
--(axis cs:10000,0.0563427168426748)
--(axis cs:7500,0.0580660562997813)
--(axis cs:5000,0.0655412538739111)
--(axis cs:4000,0.0649448724890767)
--(axis cs:3000,0.0716004147370923)
--(axis cs:2500,0.0804868761963148)
--(axis cs:2000,0.0910171336295367)
--(axis cs:1500,0.121526575772804)
--(axis cs:1000,0.133407775578808)
--(axis cs:500,0.18377058499057)
--cycle;

\path [draw=mediumseagreen42176126, fill=mediumseagreen42176126, opacity=0.2]
(axis cs:500,0.199261154041209)
--(axis cs:500,0.15304798354157)
--(axis cs:1000,0.0995145034326436)
--(axis cs:1500,0.0862918459528689)
--(axis cs:2000,0.0829577834361675)
--(axis cs:2500,0.0708290506762503)
--(axis cs:3000,0.0633493932776842)
--(axis cs:4000,0.0589477002051497)
--(axis cs:5000,0.054131994997725)
--(axis cs:7500,0.0488152517323147)
--(axis cs:10000,0.0470197214054388)
--(axis cs:10000,0.0515803383422571)
--(axis cs:10000,0.0515803383422571)
--(axis cs:7500,0.0631421046014179)
--(axis cs:5000,0.0649512307293563)
--(axis cs:4000,0.0720573783490037)
--(axis cs:3000,0.0706445899910536)
--(axis cs:2500,0.0910243521767618)
--(axis cs:2000,0.104157960320127)
--(axis cs:1500,0.130152333462834)
--(axis cs:1000,0.128172900127041)
--(axis cs:500,0.199261154041209)
--cycle;

\path [draw=mediumseagreen34167132, fill=mediumseagreen34167132, opacity=0.2]
(axis cs:500,0.207847779848274)
--(axis cs:500,0.159562045476738)
--(axis cs:1000,0.102750012509258)
--(axis cs:1500,0.0760936393283259)
--(axis cs:2000,0.0776131318479816)
--(axis cs:2500,0.076651876094943)
--(axis cs:3000,0.0707491720045936)
--(axis cs:4000,0.066047730199175)
--(axis cs:5000,0.0605683467861052)
--(axis cs:7500,0.0530745640672417)
--(axis cs:10000,0.0545054437650874)
--(axis cs:10000,0.0654556987748906)
--(axis cs:10000,0.0654556987748906)
--(axis cs:7500,0.0634516223989754)
--(axis cs:5000,0.069070414495862)
--(axis cs:4000,0.0708265879236888)
--(axis cs:3000,0.0827897256289589)
--(axis cs:2500,0.0953402768343631)
--(axis cs:2000,0.0931629224151333)
--(axis cs:1500,0.100133095786725)
--(axis cs:1000,0.130844330556957)
--(axis cs:500,0.207847779848274)
--cycle;

\path [draw=darkcyan32143140, fill=darkcyan32143140, opacity=0.2]
(axis cs:500,0.211719504428575)
--(axis cs:500,0.172818728851607)
--(axis cs:1000,0.115102996063419)
--(axis cs:1500,0.105905163655342)
--(axis cs:2000,0.089331884451302)
--(axis cs:2500,0.0867895960064509)
--(axis cs:3000,0.0930474385809474)
--(axis cs:4000,0.0821491243773417)
--(axis cs:5000,0.0711696295428249)
--(axis cs:7500,0.0674761287721606)
--(axis cs:10000,0.0673555870661541)
--(axis cs:10000,0.0749936411967472)
--(axis cs:10000,0.0749936411967472)
--(axis cs:7500,0.0817592092004804)
--(axis cs:5000,0.0817529709649113)
--(axis cs:4000,0.08999774730864)
--(axis cs:3000,0.109664948265595)
--(axis cs:2500,0.110077557042064)
--(axis cs:2000,0.11218900554618)
--(axis cs:1500,0.125030365219055)
--(axis cs:1000,0.144512723254971)
--(axis cs:500,0.211719504428575)
--cycle;

\path [draw=darkslateblue48103141, fill=darkslateblue48103141, opacity=0.2]
(axis cs:500,0.205920910536895)
--(axis cs:500,0.175925994217744)
--(axis cs:1000,0.131904506588998)
--(axis cs:1500,0.12164029811685)
--(axis cs:2000,0.114952093187283)
--(axis cs:2500,0.1143534576854)
--(axis cs:3000,0.108458399818985)
--(axis cs:4000,0.10435247894611)
--(axis cs:5000,0.0972601260889767)
--(axis cs:7500,0.0942029497586115)
--(axis cs:10000,0.0873656505006164)
--(axis cs:10000,0.0917602992397935)
--(axis cs:10000,0.0917602992397935)
--(axis cs:7500,0.099437580350794)
--(axis cs:5000,0.1037454519521)
--(axis cs:4000,0.116362641488651)
--(axis cs:3000,0.12168426807061)
--(axis cs:2500,0.130802025798869)
--(axis cs:2000,0.133155965980579)
--(axis cs:1500,0.143387403737856)
--(axis cs:1000,0.161211913918433)
--(axis cs:500,0.205920910536895)
--cycle;

\path [draw=darkslateblue6857130, fill=darkslateblue6857130, opacity=0.2]
(axis cs:500,0.206153207307478)
--(axis cs:500,0.168453342432359)
--(axis cs:1000,0.141506923628175)
--(axis cs:1500,0.131054015358974)
--(axis cs:2000,0.128789564082449)
--(axis cs:2500,0.122207895863686)
--(axis cs:3000,0.122236535139666)
--(axis cs:4000,0.11873233865387)
--(axis cs:5000,0.1175906467666)
--(axis cs:7500,0.110493585339528)
--(axis cs:10000,0.109741974010926)
--(axis cs:10000,0.112973373994369)
--(axis cs:10000,0.112973373994369)
--(axis cs:7500,0.116141960152644)
--(axis cs:5000,0.124414015985552)
--(axis cs:4000,0.127705643070869)
--(axis cs:3000,0.134053513221159)
--(axis cs:2500,0.134021802794303)
--(axis cs:2000,0.13929293542164)
--(axis cs:1500,0.149196891585301)
--(axis cs:1000,0.163435982274687)
--(axis cs:500,0.206153207307478)
--cycle;

\path [draw=indigo68184, fill=indigo68184, opacity=0.2]
(axis cs:500,0.217363616822518)
--(axis cs:500,0.185376474262916)
--(axis cs:1000,0.157518171497846)
--(axis cs:1500,0.14222865133169)
--(axis cs:2000,0.140718061747421)
--(axis cs:2500,0.136111169913789)
--(axis cs:3000,0.132862870671177)
--(axis cs:4000,0.128956356784698)
--(axis cs:5000,0.125308839243297)
--(axis cs:7500,0.125383279592322)
--(axis cs:10000,0.122233309825693)
--(axis cs:10000,0.126428685108389)
--(axis cs:10000,0.126428685108389)
--(axis cs:7500,0.130411632984353)
--(axis cs:5000,0.130770536500569)
--(axis cs:4000,0.136901250341538)
--(axis cs:3000,0.142976845524884)
--(axis cs:2500,0.142708212276915)
--(axis cs:2000,0.155057202754151)
--(axis cs:1500,0.155551445199224)
--(axis cs:1000,0.175050891211963)
--(axis cs:500,0.217363616822518)
--cycle;

\addplot [semithick, mediumseagreen53183120, mark=*, mark size=3, mark options={solid}]
table {%
500 0.165810614824295
1000 0.111573770642281
1500 0.106495872139931
2000 0.0801126584410667
2500 0.0715522244572639
3000 0.0670483335852623
4000 0.0600472465157509
5000 0.0568329319357872
7500 0.0533120222389698
10000 0.0506366863846779
};
\addlegendentry{$\epsilon=0.0$}
\addplot [semithick, mediumseagreen42176126, mark=*, mark size=3, mark options={solid}]
table {%
500 0.176154568791389
1000 0.113843701779842
1500 0.108222089707851
2000 0.0935578718781471
2500 0.080926701426506
3000 0.0669969916343689
4000 0.0655025392770767
5000 0.0595416128635406
7500 0.0559786781668663
10000 0.049300029873848
};
\addlegendentry{$\epsilon=0.05$}
\addplot [semithick, mediumseagreen34167132, mark=*, mark size=3, mark options={solid}]
table {%
500 0.183704912662506
1000 0.116797171533108
1500 0.0881133675575256
2000 0.0853880271315575
2500 0.085996076464653
3000 0.0767694488167763
4000 0.0684371590614319
5000 0.0648193806409836
7500 0.0582630932331085
10000 0.059980571269989
};
\addlegendentry{$\epsilon=0.1$}
\addplot [semithick, darkcyan32143140, mark=*, mark size=3, mark options={solid}]
table {%
500 0.192269116640091
1000 0.129807859659195
1500 0.115467764437199
2000 0.100760444998741
2500 0.0984335765242577
3000 0.101356193423271
4000 0.0860734358429909
5000 0.0764613002538681
7500 0.0746176689863205
10000 0.0711746141314507
};
\addlegendentry{$\epsilon=0.25$}
\addplot [semithick, darkslateblue48103141, mark=*, mark size=3, mark options={solid}]
table {%
500 0.190923452377319
1000 0.146558210253716
1500 0.132513850927353
2000 0.124054029583931
2500 0.122577741742134
3000 0.115071333944798
4000 0.110357560217381
5000 0.100502789020538
7500 0.0968202650547028
10000 0.0895629748702049
};
\addlegendentry{$\epsilon=0.5$}
\addplot [semithick, darkslateblue6857130, mark=*, mark size=3, mark options={solid}]
table {%
500 0.187303274869919
1000 0.152471452951431
1500 0.140125453472137
2000 0.134041249752045
2500 0.128114849328995
3000 0.128145024180412
4000 0.12321899086237
5000 0.121002331376076
7500 0.113317772746086
10000 0.111357674002647
};
\addlegendentry{$\epsilon=0.75$}
\addplot [semithick, indigo68184, mark=*, mark size=3, mark options={solid}]
table {%
500 0.201370045542717
1000 0.166284531354904
1500 0.148890048265457
2000 0.147887632250786
2500 0.139409691095352
3000 0.13791985809803
4000 0.132928803563118
5000 0.128039687871933
7500 0.127897456288338
10000 0.124330997467041
};
\addlegendentry{$\epsilon=1.0$}
\end{axis}

\end{tikzpicture}
        }
        \caption{KL divergence for increasing \emph{mistake probabilities} $\epsilon$ (10 runs, mean $\pm$ 95\% C.I.).}
        \label{fig:robustness}
    \end{minipage}
\end{figure}

\section{Open Challenges}
\label{sec:conclusion}

We now discuss some challenges that remain open for future research.
First, we considered a scenario in which we are provided with an \emph{offline} dataset of comparisons. An important future research direction is to tackle the \emph{online} scenario. Indeed, this requires balancing ($i$) the minimization of the number of human interactions and ($ii$) the maximization of the accuracy within the portion of the trajectory space explored by Pareto-optimal policies. This, in turn, requires devising a principled method for selecting trajectory pairs to compare.
Second, we observe that the fundamental modeling choice of this work, i.e., considering a 4-class comparison feedback, has the advantage of putting the least cognitive weight on the human expert. However, this comes at a cost, as the informativeness of the samples decreases with the number of objectives, which is a well-known scaling problem in the literature \citep[][Section~{10.2}]{hayes2022practical}. Indeed, in high-dimensional problems, pairwise feedback, such as incomparability, may introduce ambiguity in assigning the \emph{credit} of the feedback across different objectives. In light of this, we believe that a relevant research direction is to study the trade-off between the \emph{methodological representational power} and the \emph{human cognitive strain} of the preference elicitation method, aiming to strike a balance that achieves near-optimal performance while avoiding over-burdening the human expert.
Finally, a third, orthogonal future research direction concerns the availability of real data. To the best of our knowledge, there are currently no datasets that comprise human-generated labels and include both indifference and incomparability, as the available data allows at most for surrogate comparison generation based on annotations across multiple dimensions \citep[see, e.g.,][for an example in RLHF]{wang2024helpsteer2}. Indeed, although complex, collecting such data for tasks from various fields would enable the definition of principled benchmarks for CbRL.




\clearpage

\bibliography{biblio}

@book{lattimore2020bandit,
  title={Bandit Algorithms},
  author={Lattimore, Tor and Szepesv{\'a}ri, Csaba},
  year={2020},
  publisher={Cambridge University Press}
}

@inproceedings{abbeel2004apprenticeship,
  title={Apprenticeship learning via inverse reinforcement learning},
  author={Abbeel, Pieter and Ng, Andrew Y.},
  booktitle={Proceedings of the International Conference on Machine Learning (ICML)},
  series       = {{ACM} International Conference Proceeding Series},
  volume       = {69},
  publisher    = {{ACM}},
  year         = {2004}
}

@inproceedings{abdelkareem2022advances,
  title={Advances in preference-based reinforcement learning: A review},
  author={Abdelkareem, Youssef and Shehata, Shady and Karray, Fakhri},
  booktitle={IEEE International Conference on Systems, Man, and Cybernetics (SMC)},
  pages={2527--2532},
  year={2022},
  organization={IEEE}
}

@article{amodei2016concrete,
  title={Concrete problems in {AI} safety},
  author={Amodei, Dario and Olah, Chris and Steinhardt, Jacob and Christiano, Paul and Schulman, John and Man{\'e}, Dan},
  journal={arXiv preprint arXiv:1606.06565},
  year={2016}
}

@article{bartlett2008classification,
  title={Classification with a Reject Option using a Hinge Loss.},
  author={Bartlett, Peter L and Wegkamp, Marten H},
  journal={Journal of Machine Learning Research},
  volume={9},
  number={59},
  pages={1823--1840},
  year={2008}
}

@article{bemporad2002explicit,
  title={The explicit linear quadratic regulator for constrained systems},
  author={Bemporad, Alberto and Morari, Manfred and Dua, Vivek and Pistikopoulos, Efstratios N},
  journal={Automatica},
  volume={38},
  number={1},
  pages={3--20},
  year={2002},
  publisher={Elsevier}
}

@article{biyik2020active,
  title={Active preference-based gaussian process regression for reward learning},
  author={B{\i}y{\i}k, Erdem and Huynh, Nicolas and Kochenderfer, Mykel J and Sadigh, Dorsa},
  journal={arXiv preprint arXiv:2005.02575},
  year={2020}
}

@book{boyd2004convex,
  title={Convex Optimization},
  author={Boyd, Stephen and Vandenberghe, Lieven},
  year={2004},
  publisher={Cambridge University Press},
}

@article{bradley1952rank,
 author = {Ralph Allan Bradley and Milton E. Terry},
 journal = {Biometrika},
 number = {3/4},
 pages = {324--345},
 publisher = {Oxford University Press, Biometrika Trust},
 title = {Rank Analysis of Incomplete Block Designs: {I}. {The} Method of Paired Comparisons},
 volume = {39},
 year = {1952}
}

@inproceedings{chakraborty2024maxmin,
  title={{MaxMin-RLHF}: Alignment with Diverse Human Preferences},
  author={Chakraborty, Souradip and Qiu, Jiahao and Yuan, Hui and Koppel, Alec and Manocha, Dinesh and Huang, Furong and Bedi, Amrit and Wang, Mengdi},
  booktitle={Proceedings of the International Conference on Machine Learning (ICML)},
  pages={6116--6135},
  year={2024},
  volume={235},
  series = 	 {Proceedings of Machine Learning Research},
  publisher =    {PMLR}
}

@inproceedings{chen2026a,
  title={A Reward-Free Viewpoint on Multi-Objective Reinforcement Learning},
  author={Chen, Ying-Tu and Hung, Wei and Wu, Bing-Shu and Hong, Zhang-Wei and Hsieh, Ping-Chun},
  booktitle    = {International Conference on Learning Representations (ICLR)},
  year         = {2026}
}

@inproceedings{christiano2017deep,
 author = {Christiano, Paul F. and Leike, Jan and Brown, Tom and Martic, Miljan and Legg, Shane and Amodei, Dario},
 booktitle = {Advances in Neural Information Processing Systems (NIPS)},
 title = {Deep Reinforcement Learning from Human Preferences},
 volume = {30},
 year = {2017}
}

@article{davidson1970extending,
  title={On extending the {Bradley-Terry} model to accommodate ties in paired comparison experiments},
  author={Davidson, Roger R},
  journal={Journal of the American Statistical Association},
  volume={65},
  number={329},
  pages={317--328},
  year={1970},
  publisher={Taylor \& Francis}
}

@inproceedings{dewey2014reinforcement,
  title={Reinforcement learning and the reward engineering principle},
  author={Dewey, Daniel},
  booktitle={AAAI Spring Symposium Series},
  year={2014}
}

@InProceedings{drago2025towards,
  title = 	 {Towards Theoretical Understanding of Sequential Decision Making with Preference Feedback},
  author =       {Drago, Simone and Mussi, Marco and Metelli, Alberto M.},
  booktitle = 	 {Proceedings of the International Conference on Machine Learning (ICML)},
  pages = 	 {14499--14514},
  year = 	 {2025},
  volume = 	 {267},
  series = 	 {Proceedings of Machine Learning Research},
  publisher =    {PMLR}
}

@inproceedings{felten_toolkit_2023,
 author = {Felten, Florian and Alegre, Lucas N. and Nowe, Ann and Bazzan, Ana and Talbi, El Ghazali and Danoy, Gr\'{e}goire and C. da Silva, Bruno},
 booktitle = {Advances in Neural Information Processing Systems (NeurIPS)},
 pages = {23671--23700},
 title = {A Toolkit for Reliable Benchmarking and Research in Multi-Objective Reinforcement Learning},
 volume = {36},
 year = {2023}
}

@article{friedman1952expected,
  title={The expected-utility hypothesis and the measurability of utility},
  author={Friedman, Milton and Savage, Leonard J},
  journal={Journal of Political Economy},
  volume={60},
  number={6},
  pages={463--474},
  year={1952},
  publisher={The University of Chicago Press}
}

@article{furnkranz2012preference,
  title={Preference-based reinforcement learning: a formal framework and a policy iteration algorithm},
  author={F{\"u}rnkranz, Johannes and H{\"u}llermeier, Eyke and Cheng, Weiwei and Park, Sang-Hyeun},
  journal={Machine Learning},
  volume={89},
  number={1},
  pages={123--156},
  year={2012},
  publisher={Springer}
}

@InProceedings{haarnoja2018soft,
  title = 	 {Soft Actor-Critic: Off-Policy Maximum Entropy Deep Reinforcement Learning with a Stochastic Actor},
  author =       {Haarnoja, Tuomas and Zhou, Aurick and Abbeel, Pieter and Levine, Sergey},
  booktitle = 	 {Proceedings of the International Conference on Machine Learning (ICML)},
  pages = 	 {1861--1870},
  year = 	 {2018},
  volume = 	 {80},
  series = 	 {Proceedings of Machine Learning Research},
  publisher =    {PMLR}
}

@book{hardy1964inequalities,
    author = {Hardy, Godfrey H and Littlewood, John E and P{\'o}lya, George},
    title = {Inequalities},
    publisher = {Cambridge University Press},
    year = {1964},
    edition = {2nd}
}

@article{hayes2022practical,
  title={A practical guide to multi-objective reinforcement learning and planning},
  author={Hayes, Conor F and R{\u{a}}dulescu, Roxana and Bargiacchi, Eugenio and K{\"a}llstr{\"o}m, Johan and Macfarlane, Matthew and Reymond, Mathieu and Verstraeten, Timothy and Zintgraf, Luisa M and Dazeley, Richard and Heintz, Fredrik and Howley, Enda and Irissappane, Athirai A and Mannion, Patrick and Now{\'e}, Ann and Ramos, Gabriel and Restelli, Marcello and Vamplew, Peter and Roijers, Diederik M},
  journal={Autonomous Agents and Multi-Agent Systems},
  volume={36},
  year={2022},
  publisher={Springer US New York}
}

@article{hendrickx2024machine,
  title={Machine learning with a reject option: A survey},
  author={Hendrickx, Kilian and Perini, Lorenzo and Van der Plas, Dries and Meert, Wannes and Davis, Jesse},
  journal={Machine Learning},
  volume={113},
  number={5},
  pages={3073--3110},
  year={2024},
  publisher={Springer}
}

@InProceedings{hong2024energy,
  title = 	 {Energy-Based Preference Model Offers Better Offline Alignment than the {Bradley-Terry} Preference Model},
  author =       {Hong, Yuzhong and Zhang, Hanshan and Bao, Junwei and Jiang, Hongfei and Song, Yang},
  booktitle = 	 {Proceedings of the International Conference on Machine Learning (ICML)},
  pages = 	 {23787--23804},
  year = 	 {2025},
  volume = 	 {267},
  series = 	 {Proceedings of Machine Learning Research},
  publisher =    {PMLR}
}

@book{horn1985matrix,
  title={Matrix Analysis},
  author={Horn, Roger A. and Johnson, Charles R.},
  year={1985},
  publisher={Cambridge University Press}
}

@inproceedings{jeon2020reward,
  title={Reward-rational (implicit) choice: A unifying formalism for reward learning},
  author={Jeon, Hong Jun and Milli, Smitha and Dragan, Anca},
  booktitle={Advances in Neural Information Processing Systems (NeurIPS)},
  volume={33},
  pages={4415--4426},
  year={2020}
}

@inproceedings{jin2020reward,
  title={Reward-free exploration for reinforcement learning},
  author={Jin, Chi and Krishnamurthy, Akshay and Simchowitz, Max and Yu, Tiancheng},
  booktitle={Proceedings of the International Conference on Machine Learning (ICML)},
  pages={4870--4879},
  year={2020},
  volume={119},
  series = 	 {Proceedings of Machine Learning Research},
  organization={PMLR}
}

@article{jin2023provably,
  title={Provably efficient reinforcement learning with linear function approximation},
  author={Jin, Chi and Yang, Zhuoran and Wang, Zhaoran and Jordan, Michael I.},
  journal={Mathematics of Operations Research},
  volume={48},
  number={3},
  pages={1496--1521},
  year={2023},
  publisher={INFORMS}
}

@inproceedings{kingma2015adam,
  author       = {Diederik P. Kingma and
                  Jimmy Ba},
  title        = {Adam: {A} Method for Stochastic Optimization},
  booktitle    = {International Conference on Learning Representations (ICLR)},
  year         = {2015},
}

@article{lecun2006tutorial,
  title={A tutorial on energy-based learning},
  author={LeCun, Yann and Chopra, Sumit and Hadsell, Raia and Ranzato, {Marc'Aurelio} and Huang, Fu Jie},
  journal={Predicting Structured Data},
  volume={1},
  number={0},
  year={2006}
}

@inproceedings{lee2021b,
 author = {Lee, Kimin and Smith, Laura and Dragan, Anca and Abbeel, Pieter},
 booktitle = {Advances in Neural Information Processing Systems (NeurIPS)},
 title = {B-{Pref}: Benchmarking Preference-Based Reinforcement Learning},
 year = {2021}
}

@InProceedings{lee2021pebble,
  title = {PEBBLE: Feedback-Efficient Interactive Reinforcement Learning via Relabeling Experience and Unsupervised Pre-training},
  author = {Lee, Kimin and Smith, Laura M. and Abbeel, Pieter},
  booktitle = {Proceedings of the International Conference on Machine Learning (ICML)},
  pages = {6152--6163},
  year = {2021},
  volume = 	 {139},
  series = 	 {Proceedings of Machine Learning Research},
  publisher =    {PMLR}
}

@book{luce1959individual,
  title={Individual Choice Behavior},
  author={Luce, R Duncan},
  volume={4},
  year={1959},
  publisher={Wiley}
}

@article{mu2025preference,
  title={Preference-based multi-objective reinforcement learning},
  author={Mu, Ni and Luan, Yao and Jia, Qing-Shan},
  journal={IEEE Transactions on Automation Science and Engineering},
  year={2025},
  publisher={IEEE}
}

@inproceedings{munos2024nash,
  title={Nash learning from human feedback},
  author={Munos, R{\'e}mi and Valko, Michal and Calandriello, Daniele and Azar, Mohammad Gheshlaghi and Rowland, Mark and Guo, Zhaohan Daniel and Tang, Yunhao and Geist, Matthieu and Mesnard, Thomas and Fiegel, C{\^o}me and Michi, Andrea and Selvi, Marco and Girgin, Sertan and Momchev, Nikola and Bachem, Olivier and Mankowitz, Daniel J and Precup, Doina and Piot, Bilal},
  booktitle={International Conference on Machine Learning (ICML)},
  year={2024}
}

@inproceedings{ng2000algorithms,
  title={Algorithms for Inverse Reinforcement Learning},
  author={Ng, Andrew Y. and Russell, Stuart},
  booktitle={Proceedings of the International Conference on Machine Learning (ICML)},
  pages        = {663--670},
  publisher    = {Morgan Kaufmann},
  year={2000}
}

@book{nocedal2006numerical,
  title={Numerical Optimization},
  series={Springer Series in Operations Research and Financial Engineering},
  author={Nocedal, Jorge and Wright, Stephen},
  publisher={Springer},
  year={2006}
}

@inproceedings{ouyang2022training,
 author = {Ouyang, Long and Wu, Jeffrey and Jiang, Xu and Almeida, Diogo and Wainwright, Carroll and Mishkin, Pamela and Zhang, Chong and Agarwal, Sandhini and Slama, Katarina and Ray, Alex and Schulman, John and Hilton, Jacob and Kelton, Fraser and Miller, Luke and Simens, Maddie and Askell, Amanda and Welinder, Peter and Christiano, Paul F. and Leike, Jan and Lowe, Ryan},
 booktitle = {Advances in Neural Information Processing Systems (NeurIPS)},
 pages = {27730--27744},
 title = {Training language models to follow instructions with human feedback},
 volume = {35},
 year = {2022}
}

@inproceedings{pan2022effects,
  title={The Effects of Reward Misspecification: Mapping and Mitigating Misaligned Models},
  author={Pan, Alexander and Bhatia, Kush and Steinhardt, Jacob},
  booktitle={International Conference on Learning Representations (ICLR)},
  year={2022}
}

@inproceedings{paszke2019pytorch,
 author = {Paszke, Adam and Gross, Sam and Massa, Francisco and Lerer, Adam and Bradbury, James and Chanan, Gregory and Killeen, Trevor and Lin, Zeming and Gimelshein, Natalia and Antiga, Luca and Desmaison, Alban and Kopf, Andreas and Yang, Edward and DeVito, Zachary and Raison, Martin and Tejani, Alykhan and Chilamkurthy, Sasank and Steiner, Benoit and Fang, Lu and Bai, Junjie and Chintala, Soumith},
 booktitle = {Advances in Neural Information Processing Systems(NeurIPS)},
 title = {PyTorch: An Imperative Style, High-Performance Deep Learning Library},
 url = {https://proceedings.neurips.cc/paper_files/paper/2019/file/bdbca288fee7f92f2bfa9f7012727740-Paper.pdf},
 volume = {32},
 year = {2019}
}

@article{plackett1975analysis,
  title={The analysis of permutations},
  author={Plackett, Robin L.},
  journal={Journal of the Royal Statistical Society Series C: Applied Statistics},
  volume={24},
  number={2},
  pages={193--202},
  year={1975},
  publisher={Oxford University Press}
}

@article{qiu2024traversing,
  title={Traversing Pareto optimal policies: Provably efficient multi-objective reinforcement learning},
  author={Qiu, Shuang and Zhang, Dake and Yang, Rui and Lyu, Boxiang and Zhang, Tong},
  journal={arXiv preprint arXiv:2407.17466},
  year={2024}
}

@inproceedings{rafailov2024direct,
 author = {Rafailov, Rafael and Sharma, Archit and Mitchell, Eric and Manning, Christopher D. and Ermon, Stefano and Finn, Chelsea},
 booktitle = {Advances in Neural Information Processing Systems (NeurIPS)},
 pages = {53728--53741},
 title = {Direct Preference Optimization: Your Language Model is Secretly a Reward Model},
 volume = {36},
 year = {2023}
}

@article{rao1967ties,
  title={Ties in paired-comparison experiments: A generalization of the {Bradley-Terry} model},
  author={Rao, Pejaver V. and Kupper, Lawrence L.},
  journal={Journal of the American Statistical Association},
  volume={62},
  number={317},
  pages={194--204},
  year={1967},
  publisher={Taylor \& Francis}
}

@article{roijers2013survey,
  title={A survey of multi-objective sequential decision-making},
  author={Roijers, Diederik M. and Vamplew, Peter and Whiteson, Shimon and Dazeley, Richard},
  journal={Journal of Artificial Intelligence Research},
  volume={48},
  pages={67--113},
  year={2013}
}

@article{roy1984relational,
  title={Relational systems of preference with one or more pseudo-criteria: Some new concepts and results},
  author={Roy, Bernard and Vincke, Ph},
  journal={Management Science},
  volume={30},
  number={11},
  pages={1323--1335},
  year={1984},
  publisher={INFORMS}
}

@inproceedings{sadigh2017active,
  title={Active preference-based learning of reward functions},
  author={Sadigh, Dorsa and Dragan, Anca and Sastry, Shankar and Seshia, Sanjit},
  year={2017},
  booktitle={Robotics: Science and Systems}
}

@article{schulman2017proximal,
  title={Proximal policy optimization algorithms},
  author={Schulman, John and Wolski, Filip and Dhariwal, Prafulla and Radford, Alec and Klimov, Oleg},
  journal={arXiv preprint arXiv:1707.06347},
  year={2017}
}

@book{sen1970collective,
  title={Collective Choice and Social Welfare},
  author={Sen, Amartya K},
  publisher={Elsevier},
  year={1970}
}

@InProceedings{silver2014deterministic,
  title = 	 {Deterministic Policy Gradient Algorithms},
  author = 	 {Silver, David and Lever, Guy and Heess, Nicolas and Degris, Thomas and Wierstra, Daan and Riedmiller, Martin},
  booktitle = 	 {Proceedings of the International Conference on Machine Learning (ICML)},
  pages = 	 {387--395},
  year = 	 {2014},
  volume = 	 {32},
  series = 	 {Proceedings of Machine Learning Research},
  publisher =    {PMLR}
}

@inproceedings{siththaranjan2024distributional,
title={Distributional Preference Learning: Understanding and Accounting for Hidden Context in {RLHF}},
author={Anand Siththaranjan and Cassidy Laidlaw and Dylan Hadfield-Menell},
booktitle={International Conference on Learning Representations (ICLR)},
year={2024}
}

@book{snedecor1967statistical,
  title={Statistical Methods},
  author={Snedecor, George W},
  publisher={Iowa State University Press},
  year={1967}
}

@InProceedings{sorensen2024position,
  title = 	 {Position: A Roadmap to Pluralistic Alignment},
  author =       {Sorensen, Taylor and Moore, Jared and Fisher, Jillian and Gordon, Mitchell L. and Mireshghallah, Niloofar and Rytting, Christopher Michael and Ye, Andre and Jiang, Liwei and Lu, Ximing and Dziri, Nouha and Althoff, Tim and Choi, Yejin},
  booktitle = 	 {Proceedings of the International Conference on Machine Learning (ICML)},
  pages = 	 {46280--46302},
  year = 	 {2024},
  volume = 	 {235},
  series = 	 {Proceedings of Machine Learning Research},
  publisher =    {PMLR}
}

@misc{sutton_reward_hypothesis,
    author = {Richard S. Sutton},
    title = {The reward hypothesis},
    howpublished = {\url{http://incompleteideas.net/rlai.cs.ualberta.ca/RLAI/rewardhypothesis.html}},
    year = {2004}
}

@book{sutton2018reinforcement,
  title={Reinforcement learning: An introduction},
  author={Sutton, Richard S. and Barto, Andrew G.},
  year={2018},
  publisher={MIT Press}
}

@article{thurstone1927a,
	author = {L. L. Thurstone},
	journal = {Psychological Review},
	number = {4},
	pages = {273--286},
	title = {A Law of Comparative Judgment},
	volume = {34},
	year = {1927}
}

@inproceedings{towers2024gymnasium,
 author = {Towers, Mark and Kwiatkowski, Ariel and Balis, John and De Cola, Gianluca and Deleu, Tristan and Goul\~{a}o, Manuel and Andreas, Kallinteris and Krimmel, Markus and KG, Arjun and Perez-Vicente, Rodrigo and Terry, Jordan and Pierr\'{e}, Andrea and Schulhoff, Sander and Tai, Jun J. and Tan, Hannah and Younis, Omar G.},
 booktitle = {Advances in Neural Information Processing Systems (NeurIPS)},
 title = {Gymnasium: A Standard Interface for Reinforcement Learning Environments},
 volume = {38},
 year = {2025}
}

@article{tsoukias1995new,
  title={A new axiomatic foundation of partial comparability},
  author={Tsoukias, Alexis and Vincke, Philippe},
  journal={Theory and Decision},
  volume={39},
  number={1},
  pages={79--114},
  year={1995},
  publisher={Springer}
}

@book{vonneumann1947,
  author = {von Neumann, John and Morgenstern, Oskar},
  publisher = {Princeton University Press},
  title = {Theory of games and economic behavior},
  year = {1947}
}

@inproceedings{wang2020reward,
 author = {Wang, Ruosong and Du, Simon and Yang, Lin and Salakhutdinov, Russ R.},
 booktitle = {Advances in Neural Information Processing Systems (NeurIPS)},
 pages = {17816--17826},
 title = {On Reward-Free Reinforcement Learning with Linear Function Approximation},
 volume = {33},
 year = {2020}
}

@inproceedings{wang2024helpsteer2,
title={HelpSteer2-Preference: Complementing Ratings with Preferences},
author={Zhilin Wang and Alexander Bukharin and Olivier Delalleau and Daniel Egert and Gerald Shen and Jiaqi Zeng and Oleksii Kuchaiev and Yi Dong},
booktitle={International Conference on Learning Representations (ICLR)},
year={2025}
}

@article{wirth2017survey,
  title={A survey of preference-based reinforcement learning methods},
  author={Wirth, Christian and Akrour, Riad and Neumann, Gerhard and F{\"u}rnkranz, Johannes},
  journal={Journal of Machine Learning Research},
  volume={18},
  number={136},
  pages={1--46},
  year={2017}
}

@inproceedings{yang2020reinforcement,
  title={Reinforcement learning in feature space: Matrix bandit, kernels, and regret bound},
  author={Yang, Lin and Wang, Mengdi},
  booktitle={Proceedings of the International Conference on Machine Learning (ICML)},
  pages={10746--10756},
  year={2020},
  volume={119},
  series = {Proceedings of Machine Learning Research},
  organization={PMLR}
}

@inproceedings{zhang2021reward,
 author = {Zhang, Weitong and Zhou, Dongruo and Gu, Quanquan},
 booktitle = {Advances in Neural Information Processing Systems (NeurIPS)},
 pages = {1582--1593},
 title = {Reward-Free Model-Based Reinforcement Learning with Linear Function Approximation},
 volume = {34},
 year = {2021}
}

@inproceedings{zhang2024beyond,
  title = 	 {Beyond Bradley-Terry Models: A General Preference Model for Language Model Alignment},
  author =       {Zhang, Yifan and Zhang, Ge and Wu, Yue and Xu, Kangping and Gu, Quanquan},
  booktitle = 	 {Proceedings of the International Conference on Machine Learning (ICML)},
  pages = 	 {76939--76965},
  year = 	 {2025},
  volume = 	 {267},
  series = 	 {Proceedings of Machine Learning Research},
  publisher =    {PMLR}
}

@article{zhao2023survey,
  title   = {A Survey of Large Language Models},
  author  = {Wayne X. Zhao and
                  Kun Zhou and
                  Junyi Li and
                  Tianyi Tang and
                  Xiaolei Wang and
                  Yupeng Hou and
                  Yingqian Min and
                  Beichen Zhang and
                  Junjie Zhang and
                  Zican Dong and
                  Yifan Du and
                  Chen Yang and
                  Yushuo Chen and
                  Zhipeng Chen and
                  Jinhao Jiang and
                  Ruiyang Ren and
                  Yifan Li and
                  Xinyu Tang and
                  Zikang Liu and
                  Peiyu Liu and
                  Jian{-}Yun Nie and
                  Ji{-}Rong Wen},
  journal = {Frontiers of Computer Science},
  year    = {2026},
  publisher = {Springer}
}

@inproceedings{zhao2024ra,
 author = {Zhao, Yujie and Escamill, Jose Efraim A. and Lu, Weyl and Wang, Huazheng},
 booktitle = {Advances in Neural Information Processing Systems (NeurIPS)},
 pages = {60835--60871},
 title = {{RA-PbRL}: Provably Efficient Risk-Aware Preference-Based Reinforcement Learning},
 volume = {37},
 year = {2024}
}

@book{zitzler1999evolutionary,
  title={Evolutionary Algorithms for Multiobjective Optimization: Methods and Applications},
  author={Zitzler, Eckart},
  volume={63},
  year={1999}
}
\bibliographystyle{plainnat}

\clearpage
\appendix
\onecolumn
\setlength{\abovedisplayskip}{8pt}
\setlength{\belowdisplayskip}{8pt}
\setlength{\textfloatsep}{16pt}
\setlength{\floatsep}{16pt}
\allowdisplaybreaks[4]
\section{Extended Related Works}
\label{apx:related}

In this appendix, we summarize the relevant literature, focusing on PbRL, rationality models alternative to BT, and proposed methods for handling non-preference.

\textbf{Multi-Objective Reinforcement Learning.}~~MORL addresses the challenge of optimizing multiple, often conflicting, criteria to discover a set of Pareto-optimal policies. Recent research has emphasized the controllability of this frontier. \citet{qiu2024traversing} investigates how a specific optimization target influences the \emph{steerability} of the resulting policies, proposing a two-stage approach for finite \emph{multi-objective MDPs} (MOMDP), ensuring a broad coverage of the Pareto frontier.
More recently, \citet{chen2026a} introduced a reward-free perspective to MORL, by utilizing \emph{reward-free reinforcement learning} \citep{jin2020reward} as an auxiliary task to enhance learning in continuous MOMDPs, providing also an algorithm that returns a weight-conditioned\footnote{We use the term \quotes{weight} to refer to the scalarization weight of the multiple objectives rather than the term \quotes{preference} commonly used in MORL to avoid ambiguity.} policy that can adapt to different weights at inference time.
The CbRL setting has the same learning goal, i.e., recovering the Pareto frontier. However, it operates under significantly weaker signals, namely pairwise trajectory-level comparisons instead of the dense, multi-dimensional instantaneous rewards of MORL.
This leads to an inherently higher sample complexity, which we aim to minimize through CbRL. Towards lowering this, strategies coming from active learning and preference elicitation \citep{sadigh2017active, biyik2020active} might be used to strategically select the most informative trajectory pairs, ensuring that the learning process remains feasible for human experts.

\textbf{Preference-Based Reinforcement Learning.}~~Preference-based Reinforcement Learning \citep{furnkranz2012preference} was first formalized to integrate the \emph{learning from preferences} and RL fields, and has since gained renewed attention with the rise of LLMs \citep{zhao2023survey}.
PbRL revolves around the mathematical framework of the \emph{Markov decision process with preferences} \citep[MDPP,][]{wirth2017survey}, combining an \mdpr with a probability distribution $\rho$ to model the probability of observing a clear preference given two trajectories.
In this work, we take inspiration from the MDPP to formalize its multi-objective counterpart, the \emph{Markov decision process with comparisons} (MDPC). Indeed, our framework combines the \mdpr with a probability distribution $\rho$ defined, in our case, over a set of \emph{four} possible comparison feedback classes, considering also \emph{indifference} and \emph{incomparability}.
There are two main approaches to PbRL.
The first, known as \emph{reward learning} \citep{christiano2017deep}, comprises two steps: ($i$) inferring a reward model from observed preferences and ($ii$) learning an optimal policy via standard RL methods.
The second, \emph{direct policy optimization} \citep[DPO,][]{rafailov2024direct}, 
directly works on the policy space.
In this work, we focus on the first approach and propose a method to infer a multi-objective reward model from comparisons.
Recently, a preliminary approach to PbRL in a multi-objective setting has been proposed \citep{mu2025preference}. This work, however, does not formalize the concept of incomparability, instead eliciting standard PbRL preference feedback under a weighting of the objectives provided to the expert alongside the two trajectories.
In our opinion, this approach trades the use of a convenient rationality model for an additional burden on the human expert. Indeed, eliciting a preference under a specific numerical scalarization of the problem requires not only that the expert combine the utilities but that they can also quantify such utilities in the first place.
For this reason, we propose a novel framework to explicitly model incomparability and avoid numerical reasoning on behalf of the human expert.

\textbf{Limitations of PbRL from the Social Choice Theory Perspective.}~~The standard assumption in PbRL that human preferences can be effectively represented by a scalar utility function has recently come under scrutiny. \citet{chakraborty2024maxmin} derives a fundamental impossibility result, showing that a single scalar representation is insufficient for capturing the heterogeneity of preferences across a diverse set of users. Learning a single scalar reward can thus lead to a \emph{representational failure}, where the reward model aligns with a majority group of users while ignoring the conflicting preferences of the minority.
In the direction of overcoming this, \citet{munos2024nash} introduced \emph{Nash learning from human feedback} to align with diverse preferences by learning the Nash equilibrium of the preference model.
This can be seen as a shift of the paradigm towards a pluralistic alignment, emphasizing the need for an agent that can represent and align with a multitude of human values simultaneously, rather than converging to a single, potentially biased average \citep{sorensen2024position}. Shifting from scalar to vector-based rewards allows accounting for the inherent multi-objective nature of human judgment.

\textbf{Alternative Rationality Models.}~~The most widely considered rationality model in the PbRL literature is the Bradley-Terry \citep[BT,][]{bradley1952rank} model, which models the probability of observing a preference as the sigmoid of the difference in utility of the two trajectories.
The \emph{Plackett-Luce} (\citealt{plackett1975analysis, luce1959individual}) model generalizes BT to model \emph{rankings} of objects.
A classical alternative to BT is the \emph{Thurstone-Mosteller} \citep{thurstone1927a} model, which considers an additive Gaussian noise \citep{siththaranjan2024distributional} in the preference generation process.
In recent years, novel rationality models have been proposed to tackle specific limitations of BT.
A first approach directly models the expert's rationality, i.e., the \emph{determinism} of their decisions, via a rationality parameter that rescales each trajectory's utility \citep{jeon2020reward}. 
Then, the \emph{infinite preference model} \citep{hong2024energy} tackles the problem of BT not guaranteeing a unique optimum to the MLE problem in \emph{direct preference optimization} \citep{rafailov2024direct}, proposing an approach based on \emph{energy-based models} \citep{lecun2006tutorial}.
Finally, the \emph{general preference optimization} \citep{zhang2024beyond} addresses intransitive and/or cyclic preferences, proposing a model based on preference embeddings to capture complex structures that evade the limitations of BT's scalar utility.

\textbf{Handling Non-Preference Feedback and the Role of Abstension.}~~The standard PbRL approach prescribes that the BT model characterizes only direct and inverse preferences, without accounting for \emph{non-preference} feedback, i.e., indifference and incomparability.
Several approaches have been proposed throughout the year to model \emph{indifference} at varying levels of complexity.
The most common one does not modify the BT model, defining the expert's preference as a distribution over the two trajectories instead, and allowing for indifference by dividing this probability between the two trajectories \citep{christiano2017deep}.
An alternative approach modifies the BT model, characterizing the probability of observing an indifference as well as the probabilities of observing the two preferences.
\citep{rao1967ties} defines a \quotes{threshold of the sensory perception of the judge}, i.e., a value such that, if the difference in utility is lower than this value, the expert is unable to state a clear preference.
\citep{davidson1970extending}, instead, proposes the probability of indifference as inversely proportional to the extent to which the two trajectories are distinguishable.
The concept of \emph{incomparability} was first introduced in decision theory to model the choice of not taking a definite position w.r.t. a comparison \citep{roy1984relational}.
Incomparability can be linked to abstention and the refusal to provide a definitive preference, which have been increasingly recognized as rational expert choices rather than technical failures, both in social choice theory and in machine learning \citep[see, e.g., ][ which discusses reject options in machine learning]{hendrickx2024machine}.
\citep{sen1970collective} formulates the inability to choose among alternatives as a rational response to a conflict between competing values. 
The standard approach in the presence of incomparability in PbRL is to discard the query and consider the sample as erroneous \citep{christiano2017deep}. However, as argued in the context of robust learning, allowing the expert to \quotes{refuse} a query can improve the reliability and safety of the resulting model \citep{bartlett2008classification}.
Incomparability can be linked to both MORL and order theory, indicating that its presence induces a partial order among trajectories \citep{furnkranz2012preference}. 
In recent years, the role of incomparability has again attracted the interest of the community, advancing the idea of exploiting it to learn a Pareto frontier of policies \citep{abdelkareem2022advances}. Our work explicitly models this feedback, treating it as a primary signal to identify the presence of multiple, conflicting objectives.
\section{Derivation of the Incomparability Score Function}
\label{apx:incomp_derivation}

In this appendix, we provide the complete derivation of Equation~\eqref{eq:mobt:incomparability}, i.e., the incomparability score function $h_{\incomprel}$.

As noted in the main paper, Equation~\eqref{eq:desiderata:incomparability} requires incomparability to be the feedback with the highest probability of being generated along all the non-standard diagonals of the $d$-dimensional hyperspace.
Our modeling approach for the incomparability score function is based on the idea that points along the non-standard diagonals, i.e., all the diagonals of the hypercube that can be formulated as $x \bm{s}$ for $x \in \mathbb{R}$ and $\bm{s} \in \{-1, 1\}^d \setminus \{\bm{1}_d, -\bm{1}_d\}$, \emph{grow farther} from the standard diagonal.
Thus, we can quantify our incomparability score based on the distance of the point represented by $\utildiff$ from its projection on the standard diagonal.

Let $\bm{t} \coloneqq \frac{1}{\sqrt{d}} \cdot \mathbf{1}_d$ be a choice of vector such that the standard diagonal in $\mathbb{R}^d$ can be defined as the span of $\bm{t}$, i.e., $\mathcal{V} = \{\bm{v} \in \mathbb{R}^d: \bm{v} = c \cdot \bm{t}, c \in \mathbb{R}\}$.
Then, denoting as $\bm{p} = \overline{c} \bm{t} \in \mathcal{V}$ the projection of $\utildiff$ onto the standard diagonal, we choose to define the incomparability score function as:
\begin{equation}
\label{eq:incomp_derivation:1}
    h_\incomprel(\tau, \tau') = \| \utildiff - \bm{p} \|_2.
\end{equation}
By definition of $\bm{p}$, it follows that $\bm{t}$ is orthogonal to the vector defined as $\bm{p} - \utildiff$, i.e.:
\begin{equation}
\label{eq:incomp_derivation:2}
    \bm{t}^\top (\bm{p} - \utildiff) = 0.
\end{equation}
By applying the definition of $\bm{p}$, we can rewrite Equation~\eqref{eq:incomp_derivation:2} as $\overline{c} - \bm{t}^\top \utildiff = 0$. Applying the definition of $\bm{t}$ and solving for $\overline{v}$, we derive that:
\begin{equation}
\label{eq:incomp_derivation:3}
    \overline{c} = \frac{1}{\sqrt{d}} \bm{1}_d^\top \utildiff.
\end{equation}
We can then rewrite the definition of $\bm{p}$ as:
\begin{equation}
\label{eq:incomp_derivation:4}
    \bm{p} = \overline{c} \bm{t} = \frac{1}{d} \bm{1}_d \bm{1}_d^\top \utildiff.
\end{equation}
Notice that we can define the quantity:
\begin{equation}
\label{eq:incomp_derivation:5}
    \overline{\delta}(\tau, \tau') \coloneqq \frac{1}{d}\sum_{i=1}^d \delta_i (\tau, \tau') = \frac{1}{d} \bm{1}_d^\top \utildiff,
\end{equation}
where $\delta_i (\tau, \tau')$ represents the $i$-th component of $\utildiff$, as the \emph{mean} of the vector $\utildiff$.
By plugging Equation~\eqref{eq:incomp_derivation:5} into Equation~\eqref{eq:incomp_derivation:4}, we can rewrite $\bm{p}$ as:
\begin{equation}
\label{eq:incomp_derivation:6}
    \bm{p} = \overline{\delta}(\tau, \tau') \bm{1}_d,
\end{equation}
i.e., as the $d$-dimensional vector having all components equal to $\overline{\delta}(\tau, \tau')$.
Finally, by plugging Equation~\eqref{eq:incomp_derivation:6} into Equation~\eqref{eq:incomp_derivation:1} and applying the definition of the population standard deviation, we obtain:
\begin{equation}
\label{eq:incomp_derivation:7}
    h_\incomprel(\tau, \tau') = \| \utildiff - \overline{\delta}(\tau, \tau') \bm{1}_d \|_2.
\end{equation}
Simply adding a learnable parameter $\beta \in \mathbb{R}$ to Equation~\eqref{eq:incomp_derivation:7} results in Equation~\eqref{eq:mobt:incomparability}.
Clearly, this summation does not alter the satisfiability of the desiderata in Definition~\ref{defi:desiderata}, as $\beta$ can be set to zero. Instead, having this additional degree of freedom allows us to handle situations in which the expert may present a slight irrationality, or may be more or less prone to evaluate two trajectories as incomparable.

\newcommand{\What}{\widehat{W}}
\newcommand{\ahat}{\widehat{\alpha}}
\newcommand{\bhat}{\widehat{\beta}}
\newcommand{\Wtilde}{\widetilde{W}}
\newcommand{\atilde}{\widetilde{\alpha}}
\newcommand{\btilde}{\widetilde{\beta}}
\newcommand{\Wstar}{W^*}
\newcommand{\astar}{\alpha^*}
\newcommand{\bstar}{\beta^*}
\newcommand{\nllhat}{\widehat{\mathcal{L}}}

\section{Omitted Proofs}
\label{apx:proofs}

In this appendix, we present auxiliary lemmas and proofs of the results in the main paper.

\subsection{Auxiliary Lemmas}
\begin{lem}[Lipschitz-Continuous Loss]
\label{lem:lipschitz}
    Let $\mathcal{D} = \{(\tau_i, \tau_i', \circ_i )\}_{i=1}^N$ be a dataset of i.i.d.\ triples sampled from a distribution $\mathcal{P}_{\bm{\theta}^*} \in \Delta (\Ts \times \Ts \times \Comps)$, defined such that $\mathcal{P}_{\bm{\theta}^*}((\tau, \tau', \circ)) = f(\circ | \tau, \tau'; \bm{\theta}^*) Q((\tau, \tau'))$, for every $\tau, \tau' \in \Ts$, $\circ \in \Comps$, and for some distribution $Q\in \Delta(\Ts^2)$.
    Let:
    \begin{equation}
        \nll(\bm{\theta}) \coloneqq \mathbb{E}_{(\tau, \tau', \circ) \sim \mathcal{P}_{\bm{\theta}^*}} \left[ - \log f(\circ | \tau, \tau' ; \bm{\theta}) \right],
    \end{equation}
    and:
    \begin{equation}
        \nllhat(\bm{\theta} | \mathcal{D}) \coloneqq \frac{1}{N} \sum_{i=1}^{N} - \log f(\circ_i | \tau_i, \tau_i'; \bm{\theta}).
    \end{equation}
    Under Assumption~\ref{asm:bounded}, $\nllhat$ and $\nll$ are $L$-Lipschitz continuous in $L_{\infty}$-norm with respect to $\bm{\theta}$, with:
    \begin{equation}
        L = 8 \Lambda (\sqrt{d} + 2).
    \end{equation}
\end{lem}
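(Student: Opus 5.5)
The plan is to bound the gradient of the per-sample loss $\ell(\bm{\theta};\tau,\tau',\circ) \coloneqq -\log f(\circ|\tau,\tau';\bm{\theta})$ in $L_1$-norm, uniformly over samples. By Hölder's inequality, $|\ell(\bm{\theta}) - \ell(\bm{\theta}')| \le \sup \|\nabla_{\bm{\theta}} \ell\|_1 \, \|\bm{\theta}-\bm{\theta}'\|_\infty$, so such a bound gives Lipschitz continuity w.r.t.\ $\|\cdot\|_\infty$. Since $\nllhat$ is an empirical average and $\nll$ is an expectation of $\ell$, both inherit the same constant, so the claim reduces to a per-sample computation.

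The first step uses the softmax structure of Equation~\eqref{eq:mobt}. Write $\bm{x} \coloneqq \bm{\phi}(\tau)-\bm{\phi}(\tau')$, so $\utildiff = \bm{W}\bm{x}$ with $\|\bm{x}\|_2\le\Lambda$ by Assumption~\ref{asm:bounded}. Then
\begin{equation*}
\nabla_{\bm{\theta}}\ell = -\nabla_{\bm{\theta}} h_\circ + \sum_{\circ'\in\Comps} f(\circ'|\tau,\tau';\bm{\theta})\,\nabla_{\bm{\theta}} h_{\circ'}.
\end{equation*}
Hence $\|\nabla_{\bm{\theta}}\ell\|_1 \le 2\max_{\circ'}\|\nabla_{\bm{\theta}} h_{\circ'}\|_1$, because $f$ is a probability vector.

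The second step bounds each score's gradient in $L_1$, using the $L_1$-norm over the entries of $\bm{W}$ together with the scalars $\alpha,\beta$. The indifference score has gradient $e_\alpha$, of norm $1$. The incomparability score contributes $1$ for $\beta$, plus the $\bm{W}$-part. For $h_{\prelsinv}=\bm{1}_d^\top\bm{W}\bm{x}/(2d)$, the gradient w.r.t.\ $\bm{W}$ is $\bm{1}_d\bm{x}^\top/(2d)$, whose entrywise $L_1$-norm is $\|\bm{x}\|_1/2\le\sqrt{k}\Lambda/2$; $h_{\prels}$ is identical. For $h_\incomprel = \|\bm{P}\bm{W}\bm{x}\|_2+\beta$, with $\bm{P}=\bm{I}-\bm{1}\bm{1}^\top/d$ the projector from Appendix~\ref{apx:incomp_derivation}, the gradient is $\bm{g}\bm{x}^\top$, where $\bm{g}=\bm{P}\bm{W}\bm{x}/\|\bm{P}\bm{W}\bm{x}\|_2$ is a unit vector (defined off the diagonal; elsewhere use a subgradient or a continuity argument). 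Its entrywise $L_1$-norm is $\|\bm{g}\|_1\|\bm{x}\|_1\le\sqrt{d}\,\|\bm{x}\|_1$.

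This is where the stated constant has to be reconciled, since $\|\bm{x}\|_1$ introduces $\sqrt{k}$ while the claimed $L=8\Lambda(\sqrt{d}+2)$ has none. The claimed form suggests the intended route: measure the $\bm{W}$-block via $\|\bm{x}\|_2\le\Lambda$ (operator/Frobenius-type bookkeeping), bound $\|\bm{g}\|_1\le\sqrt{d}$, and absorb the $\alpha,\beta$ unit contributions and the preference terms into the ``$+2$'', with the factor $8$ collecting the softmax doubling and the triangle inequalities. I expect this norm bookkeeping to be the main obstacle. I would first try to state the Lipschitz property with respect to a norm on $\bm{\theta}$ that treats $\bm{W}$ through its rows, $\max_i\|\bm{W}_{i,:}\|$-style; if that fails, I would accept an extra $\sqrt{k}$ factor, which matches the $dk$ dependence appearing in Theorem~\ref{thr:sampleCompl}.

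The last step handles non-differentiability at $\bm{P}\bm{W}\bm{x}=\bm{0}$. There I would argue directly that $\bm{W}\mapsto\|\bm{P}\bm{W}\bm{x}\|_2$ is Lipschitz with the same constant, by the reverse triangle inequality
\begin{equation*}
\bigl|\|\bm{P}\bm{W}\bm{x}\|_2-\|\bm{P}\bm{W}'\bm{x}\|_2\bigr| \le \|(\bm{W}-\bm{W}')\bm{x}\|_2,
\end{equation*}
and then compose with the log-sum-exp map, which is $1$-Lipschitz in $\|\cdot\|_\infty$ over the scores. This composition route—the loss is $2$-Lipschitz in the score vector under $\|\cdot\|_\infty$, and each score is Lipschitz in $\bm{\theta}$—avoids differentiability issues entirely, and I would use it as the main argument.
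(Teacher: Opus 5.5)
Your reduction is correct: Lipschitzness in $\|\cdot\|_\infty$ is governed by the dual norm $\sup\|\nabla_{\bm\theta}\ell\|_1$. Your softmax step and score computations are also correct. The $\sqrt{k}$ you ran into is not a bookkeeping artifact. With respect to $\|\cdot\|_\infty$ the stated $k$-free constant is false, so there is nothing to reconcile.

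Here is a counterexample. Take
\begin{itemize}
\item features $\bm x=\bm\phi(\tau)-\bm\phi(\tau')=\Lambda\bm 1_k/\sqrt k$;
\item a dataset consisting of the triple $(\tau,\tau',\prels)$, which occurs with positive probability since MOBT probabilities are strictly positive;
\item $\Theta=[-R,R]^{dk+2}$, $\bm\theta_0=\bm 0$, and $\bm\theta_t$ given by $\bm W=t\bm 1_d\bm 1_k^\top$, $\alpha=\beta=0$, with $0<t\le R$.
\end{itemize}
Then $\|\bm\theta_t-\bm\theta_0\|_\infty=t$ and $\bm\delta(\tau,\tau')=s\bm 1_d$ with $s=t\sqrt k\,\Lambda$. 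The incomparability score stays $0$, $h_{\prelsinv}=-h_{\prels}=s/2$, and
\[
\widehat{\mathcal{L}}(\bm\theta_t|\mathcal D)-\widehat{\mathcal{L}}(\bm\theta_0|\mathcal D)=\frac{s}{2}+2\log\cosh\frac{s}{4}\;\ge\;\frac{\sqrt k\,\Lambda}{2}\,\|\bm\theta_t-\bm\theta_0\|_\infty .
\]
This exceeds $8\Lambda(\sqrt d+2)\,\|\bm\theta_t-\bm\theta_0\|_\infty$ as soon as $k>256(\sqrt d+2)^2$.

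The paper's proof reaches the stated constant by the route you guessed at. It bounds $\|\nabla_{\bm\theta}(-\log f)\|_F\le 8\Lambda\sqrt d+2$, which is the softmax doubling of a loose bound $4\Lambda\sqrt d+1$ on $\|\nabla_{\bm\theta}h_{\incomprel}\|_F$; your exact $\|\bm g\bm x^\top\|_F\le\Lambda$ is sharper. It then invokes $\|\bm A\|_\infty\le\|\bm A\|_F$ to pass to $\|\cdot\|_\infty$, and that inequality points the wrong way. A Frobenius bound on the gradient gives Lipschitzness in $\|\cdot\|_2$, and hence in $\|\cdot\|_\infty$ only at the cost of a factor $\sqrt{dk+2}$. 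What the step actually needs is $\|\nabla\ell\|_1\le\|\nabla\ell\|_F$, which is false. (The derived $8\Lambda\sqrt d+2$ is also not the stated $8\Lambda(\sqrt d+2)$.)

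So drop the reconciliation attempt and keep your fallbacks, which are the correct statements. Your composition argument (loss $2$-Lipschitz in the score vector under $\|\cdot\|_\infty$, reverse triangle inequality for $h_{\incomprel}$) gives:
\begin{itemize}
\item $L=2(\sqrt{dk}\,\Lambda+1)$ in $\|\cdot\|_\infty$;
\item the $k$-free $L=2(\Lambda+1)$ in $\|\cdot\|_2$;
\item $L=2(\sqrt d\,\Lambda+1)$ in your row-wise norm $\max\{\max_i\|\bm W_{i,:}\|_2,|\alpha|,|\beta|\}$, which has the shape of the claimed constant.
\end{itemize}
For the last two, the cover in Theorem~\ref{thr:sampleCompl} must be taken in the corresponding norm.

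Any of these repairs is harmless downstream. With $\epsilon=\nu/(2L\sqrt N)$, the constant $L$ enters Theorem~\ref{thr:sampleCompl} only inside a logarithm. Beyond Theorem~\ref{thr:sampleCompl}, the proof of Theorem~\ref{thr:sampleComplLocal} uses only the Frobenius gradient bound, which is valid.
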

\begin{proof}
    Let us denote as $\bm{\theta} = (\mathrm{vec}(\bm{W}), \alpha, \beta)^\top \in \mathbb{R}^{dk+2}$, a vector representing the set of parameters $(\bm{W}, \alpha, \beta)$.
    We now compute the norm of the gradients of $h_\circ(\cdot | \bm\theta)$ for every $\circ \in \Comps$ with respect to $\bm{\theta}$.
    By the Minkowski inequality \citep{hardy1964inequalities}, we have that:
    \begin{equation}
    \label{eq:lipschitz:minkovski}
        \| \nabla_{\bm{\theta}} h_\circ (\tau, \tau' | \bm{\theta}) \|_F \le \| \nabla_{\bm{W}} h_\circ (\tau, \tau' | \bm{\theta}) \|_F + \| \nabla_\alpha h_\circ (\tau, \tau' | \bm{\theta}) \|_F + \| \nabla_\beta h_\circ (\tau, \tau' | \bm{\theta}) \|_F,
    \end{equation}
    for every $\circ \in \Comps$. Thus, we can compute the gradients with respect to $\bm{W}, \alpha$, and $\beta$ individually and combine them.

    With respect to $\bm{W}$, the gradients can be upper bounded as:
    \begin{align}
        \left\| \nabla_{\bm{W}} h_{\prelsinv} (\tau, \tau' | \bm{\theta}) \right\|_{F} &= \left\| \frac{1}{2d} \bm{1}_d \left( \bm{\phi}(\tau) - \bm{\phi}(\tau') \right)^\top \right\|_{F} \le \frac{\Lambda}{2\sqrt{d}}, \label{eq:lipschitz:1} \\
        \left\| \nabla_{\bm{W}} h_{\prels} (\tau, \tau' | \bm{\theta}) \right\|_{F} &= \left\| \nabla_{\bm{W}} h_{\prelsinv} (\tau, \tau' | \bm{\theta}) \right\|_{F} \le \frac{\Lambda}{2\sqrt{d}}, \nonumber \\
        \left\| \nabla_{\bm{W}} h_{\equivrel} (\tau, \tau' | \bm{\theta}) \right\|_{F} &= 0, \nonumber
    \end{align}
    where Equation~\eqref{eq:lipschitz:1} follows from Assumption~\ref{asm:bounded}.
    Regarding incomparability, let us first rewrite its definition as:
    \begin{align}
        h_\incomprel(\tau, \tau' | \bm{\theta}) &= \sqrt{d} \sqrt{\frac{1}{d^2} \sum_{i, j \in \dsb{d}} \left( \left(\bm{W} (\bm{\phi}(\tau) - \bm{\phi}(\tau'))\right)_i - \left(\bm{W} (\bm{\phi}(\tau) - \bm{\phi}(\tau'))\right)_j \right)^2 } + \beta \\
        &= \sqrt{\frac{1}{d} \sum_{i, j \in \dsb{d}} \Big( (\bm{e}_i - \bm{e}_j)^\top \bm{W} (\bm{\phi}(\tau) - \bm{\phi}(\tau')) \Big)^2} + \beta, \label{eq:lipschitz:2}
    \end{align}
    where, in Equation~\eqref{eq:lipschitz:2}, $\bm{e}_i$ represents the $i$-th element of the $d$-dimensional canonical base, i.e., the vector obtained by replacing the $i$-th element of $\bm{0}_d$ with 1.
    Then, we can write the Frobenius norm of the gradient of $h_\incomprel$ as:
    \begin{align}
        \| \nabla_{\bm{W}} h_\incomprel (\tau, \tau | \bm{\theta}) \|_F &\le \frac{2}{\sqrt{d}} \frac{\sum_{i, j \in \dsb{d}} \left| (\bm{e}_i - \bm{e}_j)^\top \bm{W} (\bm{\phi}(\tau) - \bm{\phi}(\tau')) \right| \left\| (\bm{e}_i - \bm{e}_j) (\bm{\phi}(\tau) - \bm{\phi}(\tau'))^\top \right\|_F }{\sqrt{\sum_{i, j \in \dsb{d}} \Big( (\bm{e}_i - \bm{e}_j)^\top \bm{W} (\bm{\phi}(\tau) - \bm{\phi}(\tau')) \Big)^2}} \\
        &\le \frac{2}{\sqrt{d}} \sqrt{ \sum_{i,j \in \dsb{d}} \left\| (\bm{e}_i - \bm{e}_j) (\bm{\phi}(\tau) - \bm{\phi}(\tau'))^\top \right\|_F^2 } \label{eq:lipschitz:3} \\
        &\le \frac{2}{\sqrt{d}} \sqrt{\sum_{i,j \in \dsb{d} : i \neq j} (2 \Lambda)^2} \label{eq:lipschitz:4} \\
        &\le 4 \Lambda \sqrt{d},
    \end{align}
    where Equation~\eqref{eq:lipschitz:3} is obtained by applying the Cauchy-Schwartz inequality, and Equation~\eqref{eq:lipschitz:4} follows by Assumption~\ref{asm:bounded} and by observing that $\bm{e}_i - \bm{e}_j = \bm{0}_d$ if $i=j$.

    With respect to $\alpha$, we have:
    \begin{align}
        &\left\| \nabla_{\alpha} h_{\prelsinv} (\tau, \tau' | \bm{\theta}) \right\|_2 = 0, &&\left\| \nabla_{\alpha} h_{\prels} (\tau, \tau' | \bm{\theta}) \right\|_2 = 0, \nonumber \\
        &\left\| \nabla_{\alpha} h_{\equivrel} (\tau, \tau' | \bm{\theta}) \right\|_2 = 1, &&\left\| \nabla_{\alpha} h_{\incomprel} (\tau, \tau' | \bm{\theta}) \right\|_2 = 0. \nonumber
    \end{align}

    With respect to $\beta$, we have:
    \begin{align}
        &\left\| \nabla_{\beta} h_{\prelsinv} (\tau, \tau' | \bm{\theta}) \right\|_2 = 0, &&\left\| \nabla_{\beta} h_{\prels} (\tau, \tau' | \bm{\theta}) \right\|_2 = 0, \nonumber \\
        &\left\| \nabla_{\beta} h_{\equivrel} (\tau, \tau' | \bm{\theta}) \right\|_2 = 0, &&\left\| \nabla_{\beta} h_{\incomprel} (\tau, \tau' | \bm{\theta}) \right\|_2 = 1. \nonumber
    \end{align}

    By combining these results into Equation~\eqref{eq:lipschitz:minkovski}, we obtain that:
    \begin{equation}
        \begin{aligned}
        \label{eq:lipschitz:5}
            &\| \nabla_{\bm{\theta}} h_\prelsinv (\tau, \tau' | \bm{\theta}) \|_F \le \frac{\Lambda}{2\sqrt{d}}, &&\qquad\| \nabla_{\bm{\theta}} h_\prels (\tau, \tau' | \bm{\theta}) \|_F \le \frac{\Lambda}{2\sqrt{d}}, \\
            &\| \nabla_{\bm{\theta}} h_\equivrel (\tau, \tau' | \bm{\theta}) \|_F = 1, &&\qquad\| \nabla_{\bm{\theta}} h_\incomprel (\tau, \tau' | \bm{\theta}) \|_F \le 4\Lambda\sqrt{d} + 1.
        \end{aligned}
    \end{equation}

    We can now bound the norm of the gradient of $- \log f(\circ | \tau, \tau'; \bm{\theta})$, for every $\circ \in \Comps$:

    \begin{align}
        \| \nabla_{\bm{\theta}} - \log f(\circ | \tau, \tau'; \bm{\theta}) \|_{F} &= \left\| \nabla_{\bm{\theta}} - h_{\circ}(\tau, \tau' | \bm{\theta}) + \log \left(\sum_{\circ' \in \Comps} \exp (h_{\circ'} (\tau, \tau'| \bm{\theta}))\right) \right\|_{F} \\
        &= \| \nabla_{\bm{\theta}} - h_{\circ}(\tau, \tau' | \bm{\theta}) \|_{F} + \left\|\nabla_{\bm{\theta}} \log \left(\sum_{\circ' \in \Comps} \exp (h_{\circ'} (\tau, \tau'| \bm{\theta}))\right) \right\|_{F} \\
        &= \| \nabla_{\bm{\theta}} - h_{\circ}(\tau, \tau' | \bm{\theta}) \|_{F} + \frac{\sum_{\circ' \in \Comps}\exp(h_{\circ'}(\tau, \tau' | \bm{\theta})) \|\nabla_{\bm{\theta}} h_{\circ'} (\tau, \tau' | \bm{\theta}) \|_F}{\sum_{\circ' \in \Comps}\exp(h_{\circ'}(\tau, \tau' | \bm{\theta})}.
    \end{align}

    By upper bounding the gradients of $h$ with the highest upper bound of Equation~\eqref{eq:lipschitz:5}, we finally get:
    \begin{equation}
    \label{eq:lipschitz:6}
        \| \nabla_{\bm{\theta}} - \log f(\circ | \tau, \tau'; \bm{\theta}) \|_{F} \le 8 \Lambda \sqrt{d} + 2
    \end{equation}

    Finally, by observing that Equation~\eqref{eq:lipschitz:6} holds for every comparison label $\circ \in \Comps$ and that $\widehat{\mathcal{L}}$ and $\mathcal{L}$ are a sample mean and an expectation, respectively, of $- \log f(\circ | \tau,\tau'; \bm{\theta})$ function, and recalling that $\|\bm{A}\|_{\infty} \le \|\bm{A}\|_F$ for every matrix $\bm{A}$ \citep{horn1985matrix}, it follows that both functions are $L$-Lipschitz, with $L = 8 \Lambda (\sqrt{d} + 2)$. 
\end{proof}

\begin{lem}[Subgaussian Scores]
\label{lem:subgauss}
    Let $f : \mathbb{R}^d \to \Delta (\Comps)$ be the MOBT model defined in Equation~\eqref{eq:mobt}, and let $l : \mathbb{R} \to [0, +\infty)^4$ be the function defined as $l(\circ | \tau, \tau'; \bm{\theta}) = - \log f(\circ | \tau, \tau'; \bm{\theta})$. Under Assumption~\ref{asm:bounded}, $l$ is $\nu^2$-subgaussian, with:
    \begin{equation}
        \nu = 1 + 10 R \Lambda \sqrt{dk}.
    \end{equation}
\end{lem}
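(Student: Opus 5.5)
The plan is to show that the loss $l(\circ \mid \tau,\tau';\bm\theta) = -\log f(\circ\mid\tau,\tau';\bm\theta)$ is a \emph{bounded} random variable, uniformly over $\bm\theta\in\Theta$, $(\tau,\tau')$ and $\circ\in\Comps$. Hoeffding's lemma then gives subgaussianity: a variable supported on $[0,B]$ is $(B/2)^2$-subgaussian, hence also $B^2$-subgaussian. So it suffices to prove $0 \le l \le 1 + 10R\Lambda\sqrt{dk}$, up to constants absorbed in the stated $\nu$.

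\textbf{Log-sum-exp bound.} Nonnegativity is immediate, since $f\in(0,1]$. For the upper bound, write the softmax loss as
\[
l = -h_\circ + \log\sum_{\circ'}\exp(h_{\circ'}) \le \max_{\circ'} h_{\circ'} - h_\circ + \log 4 \le \max_{\circ'}h_{\circ'} - \min_{\circ'}h_{\circ'} + \log 4 .
\]
It therefore remains to bound the range of the four scores.

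\textbf{Bounding the scores.} Under Assumptions~\ref{asm:feature} and \ref{asm:bounded}, $\utildiff = \bm W(\bm\phi(\tau)-\bm\phi(\tau'))$ satisfies
\[
\|\utildiff\|_2 \le \|\bm W\|_F\,\Lambda \le R\sqrt{dk}\,\Lambda,
\]
because every entry of $\bm W$ lies in $[-R,R]$. The individual scores are then controlled as follows.
\begin{itemize}
\item For the preference scores, Cauchy--Schwarz gives $|h_{\prelsinv}|=|h_{\prels}| = |\bm 1_d^\top\utildiff|/(2d) \le \|\utildiff\|_2/(2\sqrt d)\le R\Lambda\sqrt{k}/2$.
\item For indifference, $|h_{\equivrel}|=|\alpha|\le R$.
\item For incomparability, $\sqrt d\,\std(\utildiff)=\|\utildiff-\overline\delta\bm 1_d\|_2\le\|\utildiff\|_2$ by the projection interpretation of Appendix~\ref{apx:incomp_derivation}. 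Hence $|h_{\incomprel}|\le R\Lambda\sqrt{dk}+R$.
\end{itemize}
Summing the extremes, the range is at most a constant multiple of $R\Lambda\sqrt{dk}+R$.

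\textbf{Main obstacle: the stand-alone $R$.} The $\alpha,\beta$ terms contribute an $R$ without a $\Lambda\sqrt{dk}$ factor, so the stated form $1+10R\Lambda\sqrt{dk}$ is not obtained for free. I would try two routes in order.
\begin{itemize}
\item First, absorb the bare $R$ into $10R\Lambda\sqrt{dk}$ under the implicit normalization $\Lambda\ge 1$ (or $dk\ge 1$ together with $\Lambda\gtrsim 1$), and let the additive $1$ cover $\log 4$ and rounding.
\item If that normalization is not acceptable, I would state $\nu$ with an extra $O(R)$ term. Alternatively, I would argue that the centered loss is what matters for the concentration used later, and write $\nu$ accordingly.
\end{itemize}
Either way, the constant $10$ leaves slack: roughly $2\cdot\tfrac12$ from the preference scores, $2$ from the incomparability score, and the $\alpha,\beta$ contributions.
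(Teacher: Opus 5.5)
Your proposal follows essentially the paper's proof. It bounds $|-\log f|$ uniformly over $\bm\theta\in\Theta$ by controlling each score (Cauchy--Schwarz for the preference scores, $|\alpha|\le R$, and $\sqrt d\,\std(\utildiff)\le\|\utildiff\|_2\le R\Lambda\sqrt{dk}$ for $h_{\incomprel}$) together with the log-sum-exp term, and then reads off subgaussianity from boundedness.

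Your ``main obstacle'' is genuine, and the paper resolves it by your first route, only silently. It writes $|h_{\incomprel}|\le 3R\Lambda\sqrt{dk}$ and replaces $e^{R}$ by $e^{3R\Lambda\sqrt{dk}}$, which requires $\Lambda\sqrt{dk}$ to be bounded below. Without such a normalization the stated $\nu$ really fails: for $\Lambda=0$ and large $R$ the loss has variance of order $R^2$.

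One small fix to your constants: since $\log 4>1$, the additive $1$ cannot cover it directly. Use the $(B/2)^2$ form of Hoeffding's lemma instead, which turns $\log 4$ into $\log 2<1$.
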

\begin{proof}
    First, observe that, for every $\tau, \tau' \in \Ts$, $\bm\theta \in \Theta$, and $\circ \in \Comps$, it holds that:
    \begin{align}
        |- \log f(\circ | \tau, \tau'; \bm\theta)| &= \left| - h_\circ (\tau, \tau' | \bm\theta) + \log \left( \sum_{\circ' \in \Comps} \exp (h_{\circ'}(\tau, \tau' | \bm\theta)) \right) \right| \\
        &\le |h_\circ (\tau, \tau' | \bm\theta)| + \left| \log \left( \sum_{\circ' \in \Comps} \exp (h_{\circ'}(\tau, \tau' | \bm\theta)) \right) \right|.
    \end{align}

    Now, we bound each component independently. Starting from $h_{\prelsinv}$, we have that:
    \begin{align}
        |h_{\prelsinv} (\tau, \tau' | \bm\theta)| &= \frac{1}{2} \left| \bm{1}_d^\top \bm W (\bm{\phi}(\tau) - \bm{\phi}(\tau')) \right| \\
        &\le \frac{1}{2} \|\bm{1}_d^\top \bm{W} \|_2 \|\bm{\phi}(\tau) - \bm{\phi}(\tau')\|_2 \label{eq:subgauss:1} \\
        &\le \frac{1}{2} R \Lambda \sqrt{k} \label{eq:subgauss:2},
    \end{align}
    under Assumption~\ref{asm:bounded}, where Equation~\eqref{eq:subgauss:1} follows from the Chauchy-Schwarz inequality. Clearly, the same holds for $h_{\prels}$, since $h_{\prels} = - h_{\prelsinv}$.
    Then, it is straightforward to verify that $|h_{\equivrel}| \le R$, under Assumption~\ref{asm:bounded}. Moving to incomparability, we have that:
    \begin{align}
        \left| h_{\incomprel} (\tau, \tau | \bm\theta) \right| = \sqrt{\frac{1}{d} \sum_{i, j \in \dsb{d}} \Big( (\bm{e}_i - \bm{e}_j)^\top \bm{W} \big( \bm{\phi}(\tau) - \bm{\phi}(\tau') \big) \Big)^2} + \beta \le 3 R \Lambda \sqrt{d k}. \label{eq:subgauss:3}
    \end{align}

    Then, we bound the log-sum-exp term, observing that $h_{\prels} = -h_{\prelsinv}$ and, consequently, $\exp (h_{\prelsinv}(\tau, \tau' | \bm\theta)) + \exp (h_{\prels}(\tau, \tau' |\bm \theta))  \ge 1$, we have:
    \begin{align}
        \Bigg| \log \Bigg( &\sum_{\circ' \in \Comps} \exp (h_{\circ'}(\tau, \tau' | \bm\theta)) \Bigg) \Bigg| \\
        &= \log \Big( \exp (h_{\prelsinv}(\tau, \tau' | \bm\theta)) + \exp (h_{\prels}(\tau, \tau' |\bm \theta)) + \exp (h_{\equivrel}(\tau, \tau' | \bm\theta)) + \exp (h_{\incomprel}(\tau, \tau' | \bm\theta)) \Big) \\
        &\le \log \left( 1 + \exp \left(\frac{1}{2}R\Lambda\sqrt{k}\right) + \exp(R) + \exp \left( 3R\Lambda\sqrt{dk} \right) \right) \label{eq:subgauss:4} \\
        &\le \log \left( 1 + 3 \exp (3 R \Lambda \sqrt{dk}) \right) \\
        &\le 1 + 9 R \Lambda \sqrt{dk} \label{eq:subgauss:5},
    \end{align}

    where Equation~\eqref{eq:subgauss:4} follows by combining the upper bounds in Equations~\eqref{eq:subgauss:2} and \eqref{eq:subgauss:3} and the upper bound on $h_{\equivrel}$.
    Finally, we can derive that, for every comparison label $\circ \in \Comps$, it holds that:
    \begin{align}
        |- \log f(\circ | \tau, \tau'; \bm\theta)| &\le \max_{\circ' \in \Comps} \left| h_{\circ'} (\tau, \tau' | \bm\theta) \right| + \left| \log \left( \sum_{\circ' \in \Comps} \exp (h_{\circ'}(\tau, \tau' | \bm\theta)) \right) \right| \\
        &\le 1 + 10 R \Lambda \sqrt{dk} \eqqcolon \nu,
    \end{align}
    thus concluding the proof.
\end{proof}

\begin{lem}
\label{lem:locToGlob}
    Let $f, g : \mathbb{R}^n \to \mathbb{R}$ be two continuously differentiable functions such that, for all $\bm{x} \in \mathbb{R}^n$, it holds that:
    \begin{equation}
    \label{eq:locToGlob:1}
        \left| f(\bm{x}) - g(\bm{x}) \right| \le c \quad \text{and} \quad \left\| \nabla g(\bm{x}) - \nabla f(\bm{x}) \right\|_2 \le d,
    \end{equation}
    where $c, d, > 0$.
    Suppose that $g$ is $\lambda$-strongly convex for some $\lambda > 0$. Let $\minloc{\bm{x}}$ be a \emph{local minimum} of $f$ and let $\minglob{\bm{x}}$ be a \emph{global minimum} of $f$. Then, it holds that:
    \begin{equation}
        f(\minloc{\bm{x}}) - f(\bm{x}_{\mathrm{glob}}) \le 2c + \frac{d^2}{2 \lambda}.
    \end{equation}
\end{lem}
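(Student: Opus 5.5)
The plan is to pass from $f$ to its strongly convex surrogate $g$, use the first-order condition at $\minloc{\bm{x}}$ to show that $\minloc{\bm{x}}$ is an approximate stationary point of $g$, and then use strong convexity (in its Polyak--{\L}ojasiewicz form) to turn approximate stationarity into approximate optimality for $g$. The uniform closeness $|f-g|\le c$ is used twice, once at $\minloc{\bm{x}}$ and once at $\minglob{\bm{x}}$, which produces the $2c$ term.

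\textbf{Step 1: approximate stationarity of $g$.} Since $f$ is continuously differentiable and $\minloc{\bm{x}}$ is an unconstrained local minimum on $\mathbb{R}^n$, we have $\nabla f(\minloc{\bm{x}}) = \bm{0}_n$. The gradient hypothesis in Equation~\eqref{eq:locToGlob:1} then gives
\[
\|\nabla g(\minloc{\bm{x}})\|_2 = \|\nabla g(\minloc{\bm{x}}) - \nabla f(\minloc{\bm{x}})\|_2 \le d .
\]

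\textbf{Step 2: PL inequality for $g$.} Because $g$ is $\lambda$-strongly convex and continuous on $\mathbb{R}^n$, it attains a unique minimum value $g^\star := \min_{\bm{y}\in\mathbb{R}^n} g(\bm{y})$. For every $\bm{x},\bm{y}\in\mathbb{R}^n$, strong convexity gives
\[
g(\bm{y}) \ge g(\bm{x}) + \nabla g(\bm{x})^\top(\bm{y}-\bm{x}) + \tfrac{\lambda}{2}\|\bm{y}-\bm{x}\|_2^2 .
\]
The right-hand side is a quadratic in $\bm{y}$. Minimizing it over $\bm{y}$, with minimizer $\bm{y} = \bm{x} - \nabla g(\bm{x})/\lambda$, yields
\[
g^\star \ge g(\bm{x}) - \frac{\|\nabla g(\bm{x})\|_2^2}{2\lambda}.
\]
Applying this at $\bm{x} = \minloc{\bm{x}}$ and using Step~1 gives
\[
g(\minloc{\bm{x}}) - g^\star \le \frac{d^2}{2\lambda}.
\]

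\textbf{Step 3: transferring back to $f$.} We chain the inequalities:
\[
f(\minloc{\bm{x}}) \le g(\minloc{\bm{x}}) + c \le g^\star + \frac{d^2}{2\lambda} + c \le g(\minglob{\bm{x}}) + \frac{d^2}{2\lambda} + c \le f(\minglob{\bm{x}}) + 2c + \frac{d^2}{2\lambda}.
\]
The first and last inequalities use $|f-g|\le c$. The third holds because $g^\star$ is the minimum of $g$. Rearranging gives the claim.

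The only step with real content is Step~2, the PL-type inequality derived from strong convexity. It needs existence of the minimizer of $g$, which strong convexity on the whole space guarantees. The rest is bookkeeping. Note that the symbol $d$ in this lemma denotes the gradient-gap constant, not the number of objectives.
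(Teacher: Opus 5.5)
Your proposal is correct and follows essentially the same route as the paper. You use stationarity of $f$ at $\minloc{\bm{x}}$ to get $\|\nabla g(\minloc{\bm{x}})\|_2 \le d$, then the strong-convexity (PL) bound to get $g(\minloc{\bm{x}}) - \min g \le d^2/(2\lambda)$, and then $|f-g|\le c$ at both $\minloc{\bm{x}}$ and $\minglob{\bm{x}}$ to supply the $2c$. The only difference is that you derive the PL inequality from the first-order strong-convexity inequality, where the paper simply invokes it; as a minor remark, that derivation bounds $\inf g$ directly, so existence of the minimizer of $g$ is not actually needed.
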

\begin{proof}
    Let us denote as $\bm{x}_g^*$ the unique minimizer of $g$, which exists since $g$ is $\lambda$-strongly convex.
    Since we have defined $\minloc{\bm{x}}$ as a local minimum of the continuously differentiable function $f$, it follows that $\nabla f(\bm{x}_{\mathrm{loc}}) = 0$. Hence, by the gradient closeness assumption, we have that:
    \begin{equation}
        \| \nabla g(\minloc{\bm{x}}) \|_2 = \| \nabla g(\minloc{\bm{x}}) - \nabla f(\minloc{\bm{x}})\|_2 \le d.
    \end{equation}
    By combining this with the strong convexity of $g$, we have that:
    \begin{equation}
        g (\minloc{\bm{x}}) - g(\bm{x}_g^*) \le \frac{\|\nabla g (\minloc{\bm{x}}) \|_2^2}{2\lambda} \le \frac{d^2}{2\lambda}.
    \end{equation}
    From the pointwise bound, we observe that:
    \begin{equation}
    \label{eq:locToGlob:2}
        f(\minloc{\bm{x}}) - f(\minglob{\bm{x}}) \le g(\minloc{\bm{x}}) - g(\minglob{\bm{x}}) + 2c.
    \end{equation}
    Moreover, since $\bm{x}_g^*$ is the global minimizer of $g$, it holds that:
    \begin{equation}
    \label{eq:locToGlob:3}
        g(\minloc{\bm{x}}) - g(\minglob{\bm{x}}) \le g(\minloc{\bm{x}}) - g(\bm{x}_g^*) \le \frac{d^2}{2\lambda}.
    \end{equation}
    Finally, combining Equations~\eqref{eq:locToGlob:2} and \eqref{eq:locToGlob:3}, we get that:
    \begin{equation}
        f(\minloc{\bm{x}}) - f(\minglob{\bm{x}}) \le 2c + \frac{d^2}{2\lambda},
    \end{equation}
    thus concluding the proof.
\end{proof}

\subsection{Omitted Proofs of the Main Paper}

\noncvx*
\begin{proof}
We demonstrate this result by contradiction.
Assume that $f: \mathbb{R}^d \to \Delta(\Comps)$ satisfies Definition~\ref{defi:desiderata}, and that $- \log f (\circ | \bm{\delta})$ is convex in $\bm{\delta}$ for every $\circ \in \Comps$.

First, observe that $f(\circ | \bm{\delta}) \in [0,1]$ for every $\circ \in \Comps$ and $\bm{\delta} \in \mathbb{R}^d$. Then, observing that $\sum_{\circ \in \Comps} f(\circ | \bm{\delta}) = 1$ and $|\Comps| = 4$, we have that $\max_{\circ \in \Comps} f(\circ | \bm{\delta}) \ge 1/4$, for every $\bm{\delta} \in \mathbb{R}^d$.
Considering now $ - \log f(\circ | \bm{\delta})$, our previous observations translate to $ - \log f(\circ | \bm{\delta}) \ge 0$, for every $\circ \in \Comps$ and $\bm{\delta} \in \mathbb{R}^d$, and $\min_{\circ \in \Comps}  - \log f(\circ | \bm{\delta}) \le \log 4$, for every $\bm{\delta} \in \mathbb{R}^d$.
Let us now consider one of the non-standard diagonals of the $d$-dimensional hyperspace. For ease of notation, fix a $\bm{t} \in \{-1, 1\}^d \setminus \{\bm{1}_d, -\bm{1}_d\}$, and denote $f(\circ | x\bm{t})$, i.e., the points along such a diagonal, as $f(\circ | x)$, with $x \in \mathbb{R}$.
Equation~\eqref{eq:desiderata:incomparability} corresponds to requiring, along the non-standard diagonal, that:
\begin{align}
    \lim_{x \to + \infty} -\log f(\incomprel | x) = c_1 \text{ and } \lim_{x \to - \infty} -\log f(\incomprel | x) = c_2,
\end{align}
where $c_1, c_2 \in [0, \log 4)$. Since, by assumption, the function is convex (and, thus, also continuous), the only way to satisfy these constraints is that $c_1=c_2 \eqqcolon c$ and enforcing that $-\log f(\incomprel | x)$ is the constant function $-\log f(\incomprel | x) = c$. Indeed, if $c_1 \neq c_2$ to fulfill convexity, either function  $-\log f(\incomprel | x)$ changes concavity (and, thus, it is not convex) or one between $c_1$ and $c_2$ is infinite (thus, violating $c_1, c_2 \in [0, \log 4)$). Thus,  it must be that:
\begin{equation}
\label{eq:noncvx:1}
f(\incomprel | x) = \exp(-c),
\end{equation}
for every $x \in \mathbb{R}$.
Now, recalling that $\sum_{\circ \in \Comps} f(\circ | x) = 1$, and considering Equation~\eqref{eq:noncvx:1}, the remaining terms must satisfy:
\begin{equation}
\label{eq:noncvx:2}
    f(\prelsinv | x) + f(\prels | x) + f(\equivrel | x) = 1 - \exp(-c) \eqqcolon K,
\end{equation}
for some constant $K > 0$.
By the desideratum of Equation~\ref{eq:desiderata:indifference}, we have that $f(\equivrel | x)$ must be the mode at $x = 0$, and:
\begin{align}
    \lim_{x \to \pm \infty} f (\equivrel | x) < \exp(-c) \text{ and } f(\equivrel | 0) > \exp(-c).
\end{align}
Combining these requirements, we have that $f(\equivrel | x)$ presents two changes in concavity.
To respect this constraint, and have $ -\log f(\equivrel | x)$ be convex, then $f(\equivrel | \cdot)$ must be a \emph{bell-shaped} log-concave function.

Let us now define the additional quantity $S(x) \coloneqq f(\prelsinv | x) + f(\prels | x) = K - f(\equivrel | x)$. Recalling that a function $g(\cdot)$ is log-concave, then it holds that $g''(\cdot) \le (g'(\cdot))^2/g(\cdot)$, by requiring that both $f(\prelsinv | x)$ and $f(\prels | x)$ must be log-concave, then $S(x)$ must be log-concave in regions of the space where $f(\equivrel | x)$ is low. In such regions, i.e., those where $x \to \infty$, we require that $f(\incomprel | x) = \exp(c)$ is the mode, and thus $f(\prelsinv | x)$ and $f(\prels | x)$ are log-concave and bounded.

Finally, we get that, for $f(\prelsinv | x), f(\prels | x)$, and $f(\equivrel | x)$ to be all log-concave simultaneously and sum to $K$, they must all be constant. This, however, contradicts the desideratum of Equation~\ref{eq:desiderata:indifference}, thus concluding the proof.

\end{proof}

\compliance*
\begin{proof}
    We demonstrate compliance with respect to one desideratum at a time.
    First, regarding \emph{direct preference} $\prelsinv$, we observe that, along the standard diagonal, $\utildiff$ is composed of $d$ equal components, and can be rewritten as:
    \begin{equation}
        \utildiff = t \bm{1}_d,
    \end{equation}
    for some $t \in \mathbb{R}$. Thus, it follows that the standard deviation of $\utildiff$ along the standard diagonal is zero.
    Compliance to the desideratum of Equation~\eqref{eq:desiderata:direct_pref} simply follows by observing that, for $t \to +\infty$, $h_{\prelsinv} \to \infty, h_{\prels} \to - \infty, h_{\equivrel} = \alpha$, and $h_{\incomprel} = 0$. By taking the softmax of the four functions, the direct preference is the probability distribution's most probable outcome.
    A similar reasoning holds for \emph{inverse preference} $\prels$, by considering $t \to -\infty$.

    Then, regarding \emph{indifference} $\equivrel$, we observe that, as $\utildiff \to \bm{0}_d$, we have that:
    \begin{equation}
        h_{\prelsinv} = 0, h_{\prels} = 0, h_{\equivrel} = \alpha, h_{\incomprel} = \beta.
    \end{equation}
    Thus, having selected $\alpha > 0$ such that $\alpha > \beta$, it is straighforward to verify that $f(\equivrel | \bm{0}_d) = \exp(\alpha) / (2 + \exp{\alpha} + \exp{\beta})$ is the maximal element of the probability distribution, ensuring compliance with the desideratum of Equation~\eqref{eq:desiderata:indifference}.

    Finally, regarding \emph{incomparability} $\incomprel$, we observe that the desideratum of Equation~\eqref{eq:desiderata:incomparability} prescribes that incomparability must be the mode of the probability distribution over $\Comps$ as $\utildiff$ goes to infinity along any non-standard diagonal of the $d$-dimensional hyperspace.
    Let us consider the case of $d \ge 2$ as otherwise, if $d=1$, the standard deviation is zero by definition.
    Fixing the direction along a non-standard diagonal, i.e., $\bm{t} \in \{+1, -1\}^d \setminus \{\bm{1}_d, -\bm{1}_d\}$, we observe that vector $\bm{t}$ comprises exactly $k$ components equal to $+1$ and $d-k$ components equal to $-1$.
    By denoting the utility difference along the non-standard diagonal as $c \cdot \bm{t}$, with $c \in \mathbb{R}$, we first observe that we can write, with a slight abuse of notation, the \emph{clear preference} scores as:
    \begin{equation}
    \label{eq:compliance:1}
        h_{\prelsinv}(c) = \frac{c (2k-d)}{2d} \quad \text{and} \quad h_{\prels}(c) = \frac{-c(2k-d)}{2d}.
    \end{equation}
    Then, denoting the mean of vector $c \bm{t}$ as $\mu = \frac{c(2k-d)}{d}$, and observing that $(c \cdot t_i)^2 = c^2$ for every $t_i \in \bm{t}$, we can write the standard deviation of the vector as:
    \begin{equation}
        \std (c \bm{t}) = \sqrt{\frac{1}{d} \sum_{i=1}^d c^2 - \mu^2} = \sqrt{c^2 - c^2 \frac{(2k-d)^2}{d^2}} = \frac{2 |c| \sqrt{k(d-k)}}{d}.
    \end{equation}
    Then, we can rewrite the incomparability score as:
    \begin{equation}
    \label{eq:compliance:2}
        h_{\incomprel}(c) = 2 |c| \sqrt{\frac{k(d-k)}{d}} + \beta.
    \end{equation}
    Compliance with the desideratum of Equation~\eqref{eq:desiderata:incomparability} then corresponds to proving that Equation~\eqref{eq:compliance:2} is greater than Equation~\eqref{eq:compliance:1} as $c \to \infty$.

    By combining Equations~\eqref{eq:compliance:1} and \eqref{eq:compliance:2} we can write:
    \begin{align}
        2 |c| \sqrt{\frac{k(d-k)}{d}} - \frac{|c| (2k-d)}{2d} + \beta &= |c| \left( 2 \sqrt{\frac{k(d-k)}{d}} - \frac{2k-d}{2d} \right) + \beta \\
        &\ge |c| \left( 2 \sqrt{\frac{d-1}{d}} - \frac{d-2}{2d} \right) + \beta \label{eq:compliance:3} \\
        &\ge |c| \left( \sqrt{2} - \frac{1}{2} \right) + \beta \label{eq:compliance:4}
    \end{align}
    where Equation~\eqref{eq:compliance:3} is obtained by observing that the minimum of $k(d-k)$ is $d-1$, attained at either $k=1$ or $k=d-1$, and the maximum value of $(2k-d)/(2d)$ is $(d-2)/(2d)$, attained by $k=d-1$, and Equation~\eqref{eq:compliance:4} holds by observing that $d\ge2$.

    By observing that $\sqrt{2} - 0.5 > 0$ and recalling that $\beta$ is a constant value, it clearly follows that:
    \begin{equation}
        \lim_{c \to \pm \infty} |c| (\sqrt{2} - 0.5) + \beta = +\infty,
    \end{equation}
    thus showing the compliance of the MOBT model with respect to all the desiderata and concluding the proof.
\end{proof}

\sampleCompl*
\begin{proof}
Under Assumption~\ref{asm:bounded}, let us consider an $\epsilon$-cover $\Theta_\epsilon$ of $\Theta \subseteq [-R,R]^{dk+2}$ in $L_\infty$-norm. We have that $|\Theta_\epsilon| \le \left(\frac{6R}{\epsilon}\right)^{dk+2}$ \citep{lattimore2020bandit}. Let now $\bm\theta^* \in \argmin_{\bm\theta \in \Theta} \mathcal{L}(\bm\theta)$ and $\widehat{\bm\theta} \in \argmin_{\bm\theta \in \Theta} \widehat{\mathcal{L}}(\bm\theta|\mathcal{D})$ as defined in Lemma \ref{lem:lipschitz}. Take $\widetilde{\bm \theta} \in \Theta_\epsilon$, we have for every $\delta \in (0,1)$, with probability at least $1-\delta$:
\begin{align}
    \nll(\widehat{\bm{\theta}}) - \nll(\bm{\theta}^*) 
    &= \nll(\widehat{\bm{\theta}}) - \nll(\bm{\theta}^*) \pm \nllhat(\widehat{\bm{\theta}}|\mathcal{D}) \\
    &\le \nll(\widehat{\bm{\theta}}) - \nllhat(\widehat{\bm{\theta}}|\mathcal{D}) + \nllhat(\bm{\theta}^*|\mathcal{D}) - \nll(\bm{\theta}^*) \\
    &\le 2 \sup_{\bm{\theta} \in \Theta} \left| \nllhat(\bm{\theta}|\mathcal{D}) - \nll(\bm{\theta}) \right| \label{eq:sampleCompl:1} \\
    &= 2 \sup_{\bm{\theta} \in \Theta} \left| \nllhat(\bm{\theta}) - \nll(\bm{\theta}) \pm \left( \nllhat(\widetilde{\bm{\theta}}|\mathcal{D}) - \nll(\widetilde{\bm{\theta}}) \right) \right| \label{eq:sampleCompl:2} \\
    &\le 2 \sup_{\bm{\theta} \in \Theta} \left| \nllhat(\bm{\theta}|\mathcal{D}) - \nll(\bm{\theta}) - \nllhat(\widetilde{\bm{\theta}}|\mathcal{D}) + \nll(\widetilde{\bm{\theta}}) \right| + 2 \left| \nll(\widetilde{\bm{\theta}}) - \nllhat(\widetilde{\bm{\theta}}|\mathcal{D}) \right| \\
    &\le 2 \sup_{\bm{\theta} \in \Theta} \inf_{\widetilde{\bm{\theta}} \in \Theta_{\epsilon}} 
    \left| \nllhat(\bm{\theta}|\mathcal{D}) - \nll(\bm{\theta}) - \nllhat(\widetilde{\bm{\theta}}|\mathcal{D}) + \nll(\widetilde{\bm{\theta}}) \right| + 2 \sup_{\widetilde{\bm{\theta}} \in \Theta_{\epsilon}} 
    \left| \nllhat(\widetilde{\bm{\theta}}|\mathcal{D}) - \nll(\widetilde{\bm{\theta}}) \right| \\
    &\le 4 L \sup_{\bm{\theta} \in \Theta} \inf_{\widetilde{\bm{\theta}} \in \Theta_{\epsilon}} 
    \left\| \bm{\theta} - \widetilde{\bm{\theta}} \right\|_{\infty}
    + 2 \sup_{\widetilde{\bm{\theta}} \in \Theta_{\epsilon}} 
    \left| \nllhat(\widetilde{\bm{\theta}}|\mathcal{D}) - \nll(\widetilde{\bm{\theta}}) \right| \label{eq:sampleCompl:3} \\
    &\le 4 \epsilon L + 2 \nu \sqrt{\frac{2 \log \left( |\Theta_{\epsilon}| / \delta \right)}{N}} \label{eq:sampleCompl:4}
\end{align}
having observed that $\nllhat(\widehat{\bm{\theta}}|\mathcal{D}) \le \nllhat({\bm{\theta}}^*|\mathcal{D})$, exploited the Lipschitzianity of $\nll$ and $\nllhat$ in Equation~\eqref{eq:sampleCompl:3} as in Lemma \ref{lem:lipschitz}, and having applied Hoeffding's inequality with the subgaussian constant derived in Lemma \ref{lem:subgauss} in Equation~\eqref{eq:sampleCompl:4}, recalling that the dataset is made of i.i.d.\ triples, and having performed a union bound over the cover $\Theta_\epsilon$. Now, recalling that $|\Theta_\epsilon| \le \left(\frac{6R}{\epsilon}\right)^{dk+2}$ and choosing $\epsilon = \frac{\nu}{2L\sqrt{N}}$ which is a viable choice whenever $N \ge \left(\frac{\nu}{2LR}\right)^2$, we have, with probability at least $1-\delta$:
\begin{align}
    \nll(\widehat{\bm{\theta}}) - \nll(\bm{\theta}^*) &\le  \frac{2\nu}{\sqrt{N}} + 2\sqrt{2} \nu \sqrt{\frac{ (dk+2) \log \left(\frac{12R\sqrt{N}}{\nu}\right) + \log \left( \frac{1}{\delta} \right)}{N}} \\
    &\le 5 \nu \sqrt{\frac{ (dk+2) \log \left(\frac{12RL\sqrt{N}}{\nu}\right) + \log \left( \frac{1}{\delta} \right)}{N}}.
\end{align}
By replacing the values of $\nu$ and $L$ and recalling that $\nll(\widehat{\bm{\theta}}) - \nll(\bm{\theta}^*) = \dkl{\mathcal{P}_{\bm{\theta}^*}}{\mathcal{P}_{\widehat{\bm{\theta}}}}$, we get the result.
\end{proof}

\sampleComplLocal*
\begin{proof}
    Recalling the empirical negative log-likelihood for a given parameter $\bm{\theta} \in \Theta$:
    \begin{equation}
        \widehat{\nll} (\bm{\theta} | \mathcal{D}) \coloneqq \frac{1}{N} \sum_{(\tau, \tau', \circ) \in \mathcal{D}} \left( - \log f (\circ | \tau, \tau'; \bm{\theta}) \right),
    \end{equation}
    we observe that we can decompose it as:
    \begin{equation}
    \label{eq:sampleComplLocal:1}
        \widehat{\nll} (\bm{\theta} | \mathcal{D}) = \frac{1}{N} \sum_{(\tau, \tau', \circ) \in \mathcal{D}} \left( - \log f (\circ | \tau, \tau'; \bm{\theta}) \mathds{1}\{\circ \neq \incomprel\} \right) + \frac{1}{N} \sum_{(\tau, \tau', \circ) \in \mathcal{D}} \left( - \log f (\circ | \tau, \tau'; \bm{\theta}) \mathds{1}\{\circ = \incomprel\}\right),
    \end{equation}
    where $\mathds{1}{\cdot}$ is the indicator function.
    Notice that the first term in Equation~\eqref{eq:sampleComplLocal:1} is convex, since the standard deviation as a function of $\bm{u}(\tau|\bm{\theta}) - \bm{u}(\tau'|\bm{\theta})$ is convex (while its negative counterpart is not) and the log-sum-exp of convex functions is convex.

    Let us now define the following function, for some $\lambda > 0$:
    \begin{equation}
        \widetilde{\nll} (\bm{\theta} | \mathcal{D}) \coloneqq \frac{1}{N} \sum_{(\tau, \tau', \circ) \in \mathcal{D}} \left( - \log f (\circ | \tau, \tau'; \bm{\theta}) \mathds{1}\{\circ \neq \incomprel\} \right) + \frac{\lambda}{2} \| \bm{\theta} \|_2^2,
    \end{equation}
    which is $\lambda$-strongly convex.
    
    We can bound the absolute difference between $\widehat{\nll}(\bm{\theta} | \mathcal{D})$ and $\widetilde{\nll}(\bm{\theta} | \mathcal{D})$ as:
    \begin{align}
        \left| \widehat{\nll}(\bm{\theta}|\mathcal{D}) - \widetilde{\nll}(\bm{\theta}|\mathcal{D}) \right| &= \left| \frac{1}{N} \sum_{(\tau, \tau', \circ) \in \mathcal{D}} \left( - \log f (\circ | \tau, \tau'; \bm{\theta}) \mathds{1}\{\circ = \incomprel\} \right) - \frac{\lambda}{2} \| \bm{\theta} \|_2^2 \right| \\
        &\le \left| \frac{1}{N} \sum_{(\tau, \tau', \circ) \in \mathcal{D}} \left( - \log f (\circ | \tau, \tau'; \bm{\theta}) \mathds{1}\{\circ = \incomprel\} \right) \right| + \left| \frac{\lambda}{2} \| \bm{\theta} \|_2^2 \right| \\
        &\le \widehat{\mathcal{P}}_{\bm{\theta}^*}(\incomprel) (1+10 R \Lambda \sqrt{d k}) + \frac{\lambda}{2} (dk+2) R^2, \label{eq:sampleComplLocal:2}
    \end{align}
    where the first term of Equation~\eqref{eq:sampleComplLocal:2} is obtained
    by denoting the relative frequency of incomparability $\widehat{\mathcal{P}}_{\bm{\theta}^*} (\incomprel) = \sum_{i=1}^N \mathds{1}\{\circ_i = \incomprel\} / N$ in the dataset
    and applying Lemma~\ref{lem:subgauss}, and the second term is obtained by bounding $\|\bm{\theta}\|_2 \le \sqrt{dk+2} \|\bm{\theta}\|_{\infty}$.

    Concerning the gradient, with the same reasoning as above and applying Lemma~\ref{lem:lipschitz}, we derive that:
    \begin{align}
        \| \nabla \widehat{\nll} (\bm{\theta}|\mathcal{D}) - \nabla \mathcal{\nll} (\bm{\theta}|\mathcal{D}) \|_2 &= \left\| \frac{1}{N} \sum_{(\tau, \tau', \circ) \in \mathcal{D}} \left( - \nabla \log f (\circ | \tau, \tau'; \bm{\theta}) \mathds{1}\{\circ = \incomprel\} \right) - \lambda \bm{\theta} \right\|_2 \\
        &\le \widehat{\mathcal{P}}_{\bm{\theta}^*}(\incomprel) (8 \Lambda \sqrt{d} + 2) + \lambda \sqrt{dk + 2} R. \label{eq:sampleComplLocal:3}
    \end{align}

    Denoting, for conciseness of presentation, $C = \widetilde{\BigO} \left( R\Lambda d k \sqrt{\frac{ \log\left(1/\delta\right)}{N}}\right)$, we now derive:
    \begin{align}
        \nll(\minloc{\widehat{\bm{\theta}}}) - \nll(\bm{\theta}^*) &= \nll(\minloc{\widehat{\bm{\theta}}}) - \nll(\bm{\theta}^*) \pm \nll(\widehat{\bm{\theta}}) \\
        &\le \nll(\minloc{\widehat{\bm{\theta}}}) - \nll(\widehat{\bm{\theta}}) + C \pm \widehat{\nll} (\minloc{\widehat{\bm{\theta}}}|\mathcal{D}) \pm \widehat{\nll} (\widehat{\bm{\theta}}|\mathcal{D}) \label{eq:sampleComplLocal:4} \\
        &\le \widehat{\nll}(\minloc{\widehat{\bm{\theta}}}|\mathcal{D}) - \widehat{\nll}(\widehat{\bm{\theta}}|\mathcal{D}) + 2 C \label{eq:sampleComplLocal:5} \\
        \begin{split}
        \label{eq:sampleComplLocal:6}
            &\le 2\widehat{\mathcal{P}}_{\bm{\theta}^*}(\incomprel) (1+10 R \Lambda \sqrt{dk}) + \frac{3}{2} \lambda(dk+2) R^2 \\
            &\phantom{\;\le} + \frac{1}{2\lambda}\widehat{\mathcal{P}}_{\bm{\theta}^*}(\incomprel)^2 (8\Lambda\sqrt{d} + 2)^2 + 2C
        \end{split} \\
        \begin{split}
        \label{eq:sampleComplLocal:7}
        &= 2\widehat{\mathcal{P}}_{\bm{\theta}^*}(\incomprel) (1+10 R \Lambda \sqrt{dk}) + 2C \\
        &\phantom{\;\le} + \sqrt{3} \widehat{\mathcal{P}}_{\bm{\theta}^*}(\incomprel) (8\Lambda\sqrt{d} + 2)\sqrt{dk+2} R
        \end{split} \\
        &\le 60 \widehat{\mathcal{P}}_{\bm{\theta}^*}(\incomprel) \Lambda d \sqrt{k+1} R + 2C \label{eq:sampleComplLocal:8}  \\
        &\le 60 \left(\mathcal{P}_{\bm{\theta}^*}(\incomprel) + \sqrt{\frac{\log (2/\delta)}{2 N}} \right) \Lambda d \sqrt{k+1} R + 2C, \label{eq:sampleComplLocal:9} 
    \end{align}
    where Equation~\eqref{eq:sampleComplLocal:4} is obtained by applying Theorem~\ref{thr:sampleCompl}, Equation~\eqref{eq:sampleComplLocal:5} is obtained by observing that we can repeat the derivation of Theorem~\ref{thr:sampleCompl} from Equation~\eqref{eq:sampleCompl:1} onward, Equation~\eqref{eq:sampleComplLocal:6} is obtained by applying Lemma~\ref{lem:locToGlob}, observing that $\minloc{\widehat{\bm{\theta}}}$ and $\widehat{\bm{\theta}}$ are a local and the global minimizer of $\widehat{\nll}$, respectively, that the two conditions in Equation~\eqref{eq:locToGlob:1} to apply the lemma are satisfied by Equations~\eqref{eq:sampleComplLocal:2} and \eqref{eq:sampleComplLocal:3}, and by bounding $(a+b)^2 \le 2a^2 + 2b^2$, Equation~\eqref{eq:sampleComplLocal:7} is obtained by minimizing over $\lambda$,
    Equation~\eqref{eq:sampleComplLocal:8} follows by observing that $d \ge 2$ otherwise no incomparabilities can be observed, and the problem is convex.
    Equation~\eqref{eq:sampleComplLocal:9} follows by applying Hoeffding's inequality, recalling that $|\mathcal{D}| \coloneqq N$.
\end{proof}
\section{Methodology, Implementation Details, and Additional Experiments}
\label{apx:additional}

In this appendix, we provide a self-contained report of the experimental pipeline described in Section~\ref{sec:experiments}. This appendix is structured as follows. First, we discuss the methodology of our experiments in Appendix~\ref{apx:additional:method}, covering the procedures for generating trajectories and comparison labels.
Second, we report the specifications of the environment employed in our experiments in Appendix~\ref{apx:additional:setup}.
Third, we discuss the implementation details in Appendix~\ref{apx:additional:implementation}.
Finally, we report additional experiments that have not been inserted in the main paper due to the space constraints in Appendix~\ref{apx:additional:results}.

\subsection{Methodology and Experimental Pipeline}
\label{apx:additional:method}

The objective of the experiments in Section~\ref{sec:experiments} is to learn a parameter $\widehat{\bm{\theta}} \in \mathbb{R}^{dk+2}$ that minimizes the empirical NLL of Equation~\eqref{eq:nll_theta} from given dataset $\mathcal{D} = \{(\tau_i, \tau_i', \circ_i)\}_{i=1}^N$ comprising $N$ triples of two trajectories and a comparison label.
The pipeline employed throughout the experiments of this paper comprises three stages: \emph{trajectory generation}, \emph{comparison label generation}, and \emph{model optimization}.
Let us now discuss each stage in detail.

\textbf{Trajectory Generation.}~~In order to generate dataset $\mathcal{D}$, we first generate a pool of trajectories by executing one or more behavioral policies in the target environment for fixed-horizon episodes.
For each trajectory, we store the full sequence of states and actions, together with the true per-step multi-dimensional reward vectors generated by the environment. Crucially, this reward is never observed by our model and is stored solely to be provided to the synthetic expert for label-generation purposes.

The rationale behind this choice is to employ diverse policies to generate trajectories that \emph{span the trajectory space}. The behavioral policies differ per environment:

\begin{itemize}[noitemsep, topsep=0pt]
    \item \emph{GridWorld}: a single exploratory policy trained with implicit rewards to cover the state space as uniformly as possible.
    \item \emph{MO-Hopper}: a suite of policies generated by first sampling a set of scalarization weights, then training for each weight in the true environment, and including in the suite both a near-optimal and a suboptimal policy.
    \item \emph{LQR}: A single policy that generates trajectories by selecting a random feedback gain and producing a roll-out of fixed length. Notably, due to the linearity of the dynamics, this is sufficient to span the cost space.
\end{itemize}

We defer the definition of the environment-specific feature maps employed for each environment to Appendix~\ref{apx:additional:setup}.

\textbf{Comparison Label Generation.}~~Once a pool of trajectories is generated, we select trajectory pairs by sampling from such a set. If the \emph{number of labels per pair} is greater than one, we sample $K$ different pairs such that each can then be labeled $N/K$ times.

We generate labels by instantiating a synthetic expert adapted from \texttt{SimTeacher} \citep{lee2021b}. We chose it as the basis for our synthetic expert as it natively provides support for parametrizable degrees of irrationality.
In particular, as discussed in Section~\ref{sec:experiments}, we evaluated varying degrees of irrationality concerning the probability of making a mistake, which we adapted from \texttt{SimTeacher} to provide a random label with probability $\epsilon \in [0,1]$.
Regarding comparison-label generation, we have modeled our synthetic expert to use our MOBT as its rationality model. When shown a pair of trajectories, we provide the expert also with the step-wise reward vectors for each trajectory, which the expert then employs to compute the score functions according to Equations~\eqref{eq:mobt:direct_pref} to \eqref{eq:mobt:incomparability}, and finally generating a label by sampling from the distribution defined as the soft-max of the scores.

\textbf{Model Optimization.}~~
Given the dataset $\mathcal{D} = \{(\tau_i, \tau_i', \circ_i)\}_{i=1}^N$ and an environment-specific feature map
$\bm{\phi}: \Ts \to \mathbb{R}^k$, we minimize the empirical NLL of Equation~\eqref{eq:nll_theta} using the ADAM
optimizer~\citep{kingma2015adam}.
As demonstrated in Proposition~\ref{prop:noncvx}, the objective is non-convex; we therefore rely on multiple random initializations to mitigate sensitivity to local optima. We report an additional experiment to quantify the spread of the training loss when using multiple initializations in Appendix~\ref{apx:additional:results}.
 
We reserve a randomly-sampled 20\% of $\mathcal{D}$ as a held-out test set.
The remaining 80\% is used for training.
After optimization, we evaluate the learned model by computing the KL divergence
$D_{\mathrm{KL}}(P_{\bm{\theta}^*} \Vert P_{\widehat{\bm{\theta}}})$ on the test set, where $P_{\bm{\theta}^*}$ is the distribution induced by the true expert parameters, and each reported result is the mean over 10 independent, seeded runs.

\subsection{Experimental Environment Specification}
\label{apx:additional:setup}
We now provide detailed descriptions of the environments employed in our experiments.

\textbf{GridWorld.}~~
We consider a custom $5 \times 5$ GridWorld with two obstacles placed in the topmost and bottommost cells of the middle column (column index 2). States are cell coordinates $(x, y) \in \mathcal{S} \coloneqq \llbracket 0, 4\rrbracket^2$,
where $(0, 0)$ denotes the top-left cell and $(4, 4)$ the bottom-right cell.
The agent starts each episode in the bottom-left cell $(4, 0)$.
The action space is $\mathcal{A} \coloneqq \{\texttt{U}, \texttt{D}, \texttt{L}, \texttt{R},
\texttt{S}\}$ (\emph{up}, \emph{down}, \emph{left}, \emph{right}, \emph{stay}).
The environment is implemented such that actions that would make the agent exit the grid or move into an obstacle are overwritten to make the agent stay in place.
 
At each step, the environment emits a three-dimensional reward vector
$\bm{r}_{h+1} \in \mathbb{R}^3$ representing three objectives:
\begin{enumerate}[label=(\textit{\roman*}), noitemsep, topsep=2pt]
    \item \emph{Reach target}: $+1$ if $(x_{h+1}, y_{h+1}) = (4, 4)$, and $-0.1$ otherwise;
    \item \emph{Reach top row}: $-0.1 \cdot x_{h+1}$, where $x$ is the row index;
    \item \emph{Avoid obstacles}: $-0.5$ if $(x_{h+1}, y_{h+1})$ is adjacent to an obstacle, $0$ otherwise.
\end{enumerate}
The true reward matrices for all three objectives are visualized in Figure~\ref{fig:gw_rewards}.
The feature map counts state visitation frequencies:
$\bm{\phi}(\tau) \in \mathbb{R}^{25}$, where the $i$-th entry is the number of time steps the
agent occupies cell $i$.
 
\textbf{MO-Hopper.}~~
We use the \texttt{mo-hopper-2obj-v5} environment from the MO-Gymnasium suite~\citep{felten_toolkit_2023},
the multi-objective extension of \texttt{Hopper-v5}~\citep{towers2024gymnasium}.
We consider two objectives: ($i$) forward velocity along the $x$-axis and ($ii$) height of the robot along the $z$-axis.
The base environment provides an 11-dimensional observation vector at each step.
We define the feature map as the episode-cumulative observation:
$\bm{\phi}(\tau) = \sum_{h=1}^{H} s_h \in \mathbb{R}^{11}$.
 
\textbf{Linear Quadratic Regulator.}~~
We consider a one-dimensional discrete-time Linear Quadratic Regulator~\citep{bemporad2002explicit}
with separated state and control costs, yielding a two-objective environment.
Both the state $x_h \in \mathbb{R}$ and action $u_h \in \mathbb{R}$ are scalar.
The dynamics follow:
\begin{equation}
    x_{h+1} = a\, x_h + b\, u_h + \eta_h, \qquad \eta_h \sim \mathcal{N}(0, 1),
\end{equation}
and the per-step cost vector is:
\begin{equation}
    \bm{c}_h = \bigl(q\, x_h^2,\; r\, u_h^2\bigr)^\top \in \mathbb{R}^2.
\end{equation}
The system parameters are set to $a = 0.9$, $b = 1.0$, $q = 1.5$, and $r = 2.0$.
The feature map returns the mean squared state and action over the episode:
$\bm{\phi}(\tau) = \bigl(\frac{1}{H}\sum_{h=1}^{H} x_h^2,\; \frac{1}{H}\sum_{h=1}^{H} u_h^2\bigr)^\top \in \mathbb{R}^2$.
This choice is consistent with the quadratic structure of the cost, enabling exact recovery of
$q$ and $r$ from the estimated utility weight matrix $\widehat{W}$.
The Pareto frontier admits a closed-form solution under the LQR formalism, providing an exact
reference for policy-level evaluation.

\subsection{Implementation Details}
\label{apx:additional:implementation}

All experiments are implemented in Python 3.10 using PyTorch~\citep{paszke2019pytorch}. The codebase is organized into three modules: (i) \texttt{expert}, implementing the synthetic MOBT-based label generator; (ii) \texttt{optimizer}, wrapping the ADAM-based NLL minimizer; and (iii) \texttt{environment}, providing the GridWorld, MO-Hopper, and LQR
environments. Experiment configurations are specified via \texttt{.yaml} files covering the four pipeline stages: policy generation, trajectory generation, label generation, and model evaluation. The LQR experiment is additionally available as a self-contained Jupyter notebook.
A \texttt{README.md} with installation instructions and bash scripts for reproducing all results is provided in the supplementary code.
All experiments have been conducted on a 10-core ARM CPU with 16GB of RAM.

\paragraph{Hyperparameters.}
Table~\ref{tab:hyperparams} reports the default hyperparameters used to fit the MOBT model in the experiments. Notably, given the non-convexity result of Proposition~\ref{prop:noncvx}, early stopping serves a dual purpose: ($i$) it acts as an implicit regularizer and ($ii$) it avoids unnecessary computation after the optimizer has effectively settled into a local minimum. The patience of 10 epochs was chosen empirically to balance convergence speed and stability across the three environments. The default learning rate of $10^{-3}$ is the standard recommended value for ADAM and was not tuned per environment.

\begin{table}[h]
\centering
\begin{tabular}{p{3cm} p{2cm} p{8cm}}
\toprule
\textbf{Hyperparameter} & \textbf{Default Value} & \textbf{Description} \\
\midrule
Learning rate           & $10^{-3}$   & Step size for the ADAM optimizer \citep{kingma2015adam}. \\
Batch size              & $256$       & Number of comparison triples per gradient update. \\
Max.\ epochs            & $5000$      & Upper bound on training iterations; in practice, training terminates earlier via early stopping (see below). \\
Validation split        & $0.2$       & Fraction of the dataset held out for model evaluation (consistent with Section~4). \\
Early stopping patience & $10$        & Number of epochs without improvement on the validation NLL before training is halted. It mitigates overfitting and reduces unnecessary computation once convergence is reached. \\
\bottomrule
\end{tabular}

\vspace{0.2cm}

\caption{Hyperparameters used for MOBT training.}
\label{tab:hyperparams}
\end{table}

\subsection{Additional Experimental Results}
\label{apx:additional:results}

We now report additional results to complement those discussed in the main paper.

\textbf{Multiple Initialization Analysis.}~~
As discussed in Section~\ref{sec:model}, in CbRL we have no guarantees of reaching the global optimum as a result of the optimization process. This is due to the non-convexity of the NLL shown in Proposition~\ref{prop:noncvx}. Indeed, the presence of incomparabilities in the data introduces non-convexity in the optimization manifold.
Nonetheless, as is customary in the Deep RL literature \citep{schulman2017proximal, haarnoja2018soft}, convex first-order methods such as ADAM \citep{kingma2015adam} can be employed with good empirical performance, though with convergence guarantees only to local optima.

One common practical refinement to improve the overall performance is to employ an ensemble of predictors trained from randomized initializations. When using such an ensemble, the performance of the model then depends on the \emph{spread of the optimization quality} of the predictors in the ensemble.

We now present an additional experiment to quantify the spread of the training loss when performing multiple initializations in our setting, for varying degrees of incomparability in the dataset, i.e., the relative frequency of incomparability labels.
As discussed in Theorem~\ref{thr:sampleComplLocal}, the ratio of incomparability is a quantity that depends on both the dataset construction, i.e., the distribution $Q \in \Delta(\Ts^2)$ from which trajectory pairs are sampled, and on the problem instance itself, i.e., on the number of objectives and on how much they are in conflict with one another.
Intuitively, a higher ratio of incomparability represents a more complex problem.

We consider a dataset of $N=5000$ samples, on which we control the ratio of incomparabilities $p$, which represents the source of non-convexity. 
To quantify the \emph{spread of optimization quality}, we employ the \emph{relative standard deviation} \citep{snedecor1967statistical}, defined as the ratio of the standard deviation over the mean of a set of measurements. In Table~\ref{tab:multiple_restarts}, we report the training loss and relative standard deviation for varying values of $p \in \{0.0, 0.2, 0.4, 0.6\}$. We perform 100 independent optimizations per ratio, averaging over 10 seeds and 10 random initializations per seed.




\begin{table}[t]
\centering
\renewcommand{\arraystretch}{1.2}
\begin{tabular}{l|cc}
    \toprule
    Incomparability Ratio & Training Loss & Relative Standard Deviation \\
    \midrule
    0.0 & $0.84 \pm 0.0016$ & $0.0010$ \\
    0.2 & $1.02 \pm 0.0014$ & $0.0006$ \\
    0.4 & $1.01 \pm 0.0298$ & $0.0147$ \\
    0.6 & $0.91 \pm 0.0600$ & $0.0329$ \\
    \bottomrule
\end{tabular}

\vspace{0.2cm}

\caption{Training loss (mean $\pm$ 95\% C.I.) and relative standard deviation for $p \in \{0.0, 0.2, 0.4, 0.6\}$.}
\label{tab:multiple_restarts}
\end{table}

First, we observe that the complexity of the optimization landscape increases with $p$, which we conjecture is due to the higher number of local optima. Second, the overall "spread" in the quality remains within an acceptable range, even when $p$ reaches 60\%. This can be interpreted as a consistent optimization performance during training.

\textbf{Extended Baseline Comparison.}~~
In Section~\ref{sec:experiments}, we argued that standard PbRL methods are not sufficient to address CbRL problems.
To support this claim, we compare our MOBT model against several PbRL baselines, namely, standard Bradley-Terry \citep[BT,][]{bradley1952rank}, Thurstone-Mosteller \citep[TM,][]{thurstone1927a}, Rao-Kupper \citep[RK,][]{rao1967ties}, and Davidson \citep{davidson1970extending} models. Considering a train dataset of $N=2000$ randomly sampled trajectory pairs, each labeled once, in the GridWorld environment, we adapted the dataset to fit the representational capacity of each model, by either \emph{discarding} or \emph{transforming} data. This corresponds to discarding both indifferences and incomparabilities for the BT and TM models, and incomparabilities for the RK and Davidson models. Additionally, we also consider the case in which both RK and Davidson models surrogate the problem by considering incomparabilities as if they were indifference labels.

First, we trained each model on its corresponding dataset. We report the recovered reward function for each baseline model in Figure~\ref{fig:gw_baseline_apx}.

\begin{figure*}[t]
    \centering
    \hspace{-0.3cm}
    \begin{subfigure}[t]{0.3\textwidth}
        \centering
        \resizebox{!}{3.1cm}{
            \includegraphics[]{images/heatmaps/baselines/baseline_bt.pdf}
        }
        \caption{Bradley-Terry model.}
        \label{fig:gw_baseline_apx:bt}
    \end{subfigure}
    \hfill
    \begin{subfigure}[t]{0.3\textwidth}
        \centering
        \resizebox{!}{3.1cm}{
            \includegraphics[]{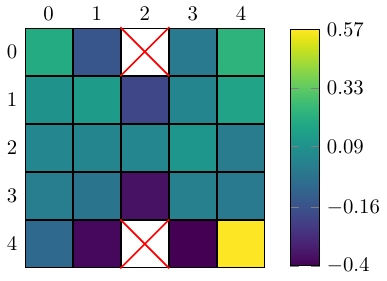}
        }
        \caption{Thurstone-Mosteller model.}
        \label{fig:gw_baseline_apx:tm}
    \end{subfigure}
    \hfill
    \begin{subfigure}[t]{0.3\textwidth}
        \centering
        \resizebox{!}{3.1cm}{
            \includegraphics[]{images/heatmaps/baselines/baseline_rk_no_incomp.pdf}
        }
        \caption{Rao-Kupper model, incomparabilities discarded.}
        \label{fig:gw_baseline_apx:rk}
    \end{subfigure}

    \hspace{-0.3cm}
    \begin{subfigure}[t]{0.3\textwidth}
        \centering
        \resizebox{!}{3.1cm}{
            \includegraphics[]{images/heatmaps/baselines/baseline_rk_incomp.pdf}
        }
        \caption{Rao-Kupper model, incomparabilities considered as ties.}
        \label{fig:gw_baseline_apx:rk_ties}
    \end{subfigure}
    \hfill
    \begin{subfigure}[t]{0.3\textwidth}
        \centering
        \resizebox{!}{3.1cm}{
            \includegraphics[]{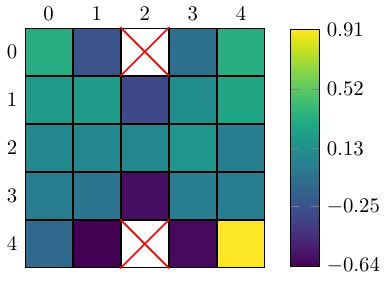}
        }
        \caption{Davidson model, incomparabilities discarded.}
        \label{fig:gw_baseline_apx:davidson}
    \end{subfigure}
    \hfill
    \begin{subfigure}[t]{0.3\textwidth}
        \centering
        \resizebox{!}{3.1cm}{
            \includegraphics[]{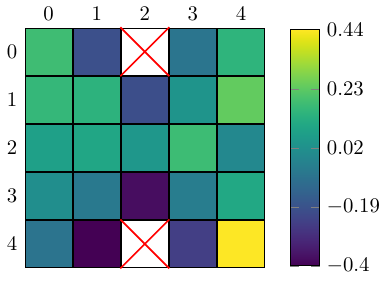}
        }
        \caption{Davidson model, incomparabilities considered as ties.}
        \label{fig:gw_baseline_apx:davidson_ties}
    \end{subfigure}
    
    \caption{GridWorld reward matrix estimated using different single-objective rationality models (mean over 10 runs, each using 2000 trajectory pairs and 1 label per pair).}
    \label{fig:gw_baseline_apx}
\end{figure*}

Clearly, all baseline models recover a reward function that minimizes the empirical NLL. However, due to their formulation, they are limited to learning a 1-dimensional reward, and are thus unable to generalize to a multi-objective problem.

Then, we evaluated each model on a different randomly sampled test set. The test set comprises all four comparison label types, implying that each model assigns zero probability to feedback classes it cannot handle.
Crucially, this is necessary, as discarding samples at test time would be formally incorrect for two main reasons: ($i$) it would introduce a bias in the data-generating distribution and ($ii$) it would require knowing the label before it is observed.
Due to their lower representational power, we cannot evaluate the KL divergence of the baseline models over the test set. Indeed, for any trajectory pair $(\tau, \tau')$ for which $f(\incomprel | \tau, \tau') > 0$, the KL divergence of the baselines takes value $+\infty$. This is due to the baselines induced probability distributions over the comparison labels lacking the necessary support for incomparability (and, in the case of BT and TM models, also for indifference).
This clearly highlights why standard PbRL models are insufficient to address CbRL problems.
Nevertheless, to provide a meaningful, quantitative comparison of MOBT against the baselines, we consider the \emph{total variation} (TV) distance, which always lies in $[0,1]$ and remains finite even if the two compared distributions do not share the same support. Given two distributions $P, Q \in \Delta(\Xs)$, we define their TV distance as $D_{\mathrm{TV}}(P \| Q) \coloneqq (1/2) \sum_{x \in \mathcal{X}} |P(x) - Q(x)|$. We report in Table~\ref{tab:tv_baselines} the TV distance between the expert’s true probability distribution and the probability distribution over comparison labels induced by each model, averaged over 10 independent seeded runs.



\begin{table}[t]
\centering
\renewcommand{\arraystretch}{1.2}
\begin{tabular}{l|c}
    \toprule
    Model & TV (mean $\pm 95\%$ C.I.) \\
    \midrule
    BT & $0.6113 \pm 0.0036$ \\
    TM & $0.6115 \pm 0.0036$ \\
    RK ($\incomprel$ discarded) & $0.4022 \pm 0.0040$ \\
    RK ($\incomprel$ as $\equivrel$) & $0.4613 \pm 0.0046$ \\
    Davidson ($\incomprel$ discarded) & $0.4014 \pm 0.0040$ \\
    Davidson ($\incomprel$ as $\equivrel$) & $0.4594 \pm 0.0046$ \\
    MOBT (\emph{ours}) & $0.0435 \pm 0.0139$ \\
    \bottomrule
\end{tabular}

\vspace{0.2cm}

\caption{Total Variation (mean $\pm$ 95\% C.I.) distance between the expert's and model's probability distribution over $\Comps$ computed on the test set.}
\label{tab:tv_baselines}
\end{table}

We observe that our MOBT model effectively approximates the true expert comparison-generation probability distribution, achieving a TV distance that is an order of magnitude lower than that of the baselines.
Recalling that the baseline models are \emph{structurally unable} to predict incomparability labels, the evaluation via TV distance may underestimate their failure, as TV can only provide a bounded penalization in the case of support mismatch, thus underestimating the lack of representation power of the baseline models.
Moreover, as expected, we observe that the more data is \quotes{unusable}, in the sense of the model not being able to learn from that data, the worse the performance, e.g., BT and TM compared to RK and Davidson discarding incomparabilities.
Finally, we also observe that \emph{mislabeling} the data, i.e., considering incomparabilities as if they were ties, does not provide an improvement down the line, resulting instead in a disproportionate probability distribution.

Regarding the behaviors that can be learned from the recovered reward functions, we notice that under the MOBT model we are able to combine the three matrices under different weight vectors to traverse the Pareto frontier. The baselines, on the other hand, can only recover a single, scalar reward function, and are thus unable to deal with the multi-objective nature of CbRL.

\end{document}